\author{Alexandr Andoni \and Rina Panigrahy}
\date{Microsoft Research SVC}
\newtheorem{theorem}{Theorem}[section]
\newtheorem{lemma}[theorem]{Lemma}
\newtheorem{corollary}[theorem]{Corollary}
\newtheorem{remark}[theorem]{Remark}
\newtheorem{claim}[theorem]{Claim}
\newcommand{\N}{{\mathbb{N}}}
\newcommand{\R}{{\mathbb{R}}}
\newcommand{\eps}{\epsilon}
\newcommand{\beq}{\begin{eqnarray}}
\newcommand{\eeq}{\end{eqnarray}}
\newcommand{\bt}{\tilde{b}}
\newcommand{\vb}{{\bar b}}
\newcommand{\sign}{\text{sign}}
\newcommand{\T}{T}
\newcommand{\E}[2]{{\mathbb{E}_{#1}\left[#2\right]}}
\DeclareMathOperator{\poly}{poly}
\DeclareMathOperator{\erfi}{erfi}
\DeclareMathOperator{\erf}{erf}
\DeclareMathOperator{\dx}{dx}
\DeclareMathOperator{\dy}{dy}
\newcommand{\pd}[2]{\frac{\partial#1}{\partial#2}}
\title{A Differential Equations Approach to Optimizing Regret Trade-offs}
\begin{document}
\maketitle
\pagestyle{plain}

\begin{abstract}
We consider the classical question of predicting binary sequences and
study the {\em optimal} algorithms for obtaining the best possible
regret and payoff functions for this problem. The question turns out
to be also equivalent to the problem of optimal trade-offs between
the regrets of two experts in an ``experts problem'', studied before by
\cite{kearns-regret}. While, say, a regret of $\Theta(\sqrt{T})$ is
known, we argue that it important to ask what is the provably optimal
algorithm for this problem --- both because it leads to natural
algorithms, as well as because regret is in fact often comparable in
magnitude to the final payoffs and hence is a non-negligible term.

In the basic setting, the result essentially follows from a classical
result of Cover from '65. Here instead, we focus
on another standard setting, of time-discounted payoffs, where the
final ``stopping time'' is not specified. We exhibit an explicit
characterization of the optimal regret for this setting.

To obtain our main result, we show that the optimal payoff functions
have to satisfy the Hermite differential equation, and hence are given
by the solutions to this equation. It turns out that characterization
of the payoff function is qualitatively different from the classical
(non-discounted) setting, and, namely, there's essentially a unique
optimal solution.
\end{abstract}

%% \aanote{items}
%% \begin{itemize}
%% \item
%% Check error calculation -- how does it affect the solution. Should
%% check first for the parametrization when $\delta=1/n$ (vs
%% $\delta=1/\sqrt{n}$).
%% \item
%% If $\tilde b$ is unbounded, do we really get the $F$ that we derive
%% from the diff eq.
%% \item
%% Is there an iterative (empirical) approach to computing the feasible
%% solution for 1D. Hopefully extend to 2D.
%% \item
%% Why is the characterization (\# of degrees of freedom) of the solution
%% for the time-discounted version is very different from the
%% non-discounted version?
%% \item
%% Extension of the connection to 2-expert case to multi-expert case.
%% \item
%% Regarding non-discounted version. 1) Check Rakhlin\&co's results on DP for
%% regret-optimization. 2) state argument (DP) for this case. 
%% \item
%% We want to include plots: i) plot of $f,b$ for exponential weights;
%% ii) R/L trade-off plot in our functions vs in the previous KP paper;
%% iii) plot of the time-discounted/non-discounted solution.
%% \item
%% What are the regrets for standard regret-opt algorithms. For WM it's
%% roughly $\sqrt{T}$. (Kearns' talk may mention some others.)
%% \end{itemize}

%\thispagestyle{empty} \setcounter{page}{0}
%\newpage
\section{Introduction}

Consider the following classical game of predicting a binary $\pm 1$
sequence. The player (predictor) sees a binary sequence $\{b_t\}_{t\ge
  1}$, one bit at a time, and attempts to predict the next bit $b_t$
from the past history $b_1,\ldots b_{t-1}$. The {\em payoff} (score)
of the algorithm is then the count of correct guesses minus the number
of the wrong guesses, formally defined as follows, for some target
time $T>0$, and where $\bt_t$ is the prediction at time $t$:
$$ 
A_T=\sum_{1\le t\le T} b_t\bt_t.
$$

One can view this game as an idealized ``stock prediction'' problem as
follows. Each day, the stock price goes up or down by precisely one
dollar, and the player bets on this event. If the bet is right, the
player wins one dollar, and otherwise she looses one dollar. Not
surprisingly, in general, it is impossible to guarantee a positive
payoff for all possible scenarios (sequences), even for randomized
algorithms. However, one could hope to give some guarantees when the
sequence has some additional property.

The above sequence prediction problem is in fact precisely equivalent
to the two experts problem (or multi-armed bandits problem), where one
considers two experts, via a reduction: one side of the reduction
follows simply by using two experts, one always predicts ``+1'' and
another always predicts ``-1''. Then one measures the {\em regret} of
an algorithm: how much worse one's algorithm does as opposed to the
best of the two experts (in hindsight, after seeing the sequence),
which is equal to $|\sum_{1\le t\le T} b_t|$. We will henceforth will
refer to $\sum_{1\le t\le T} b_t$ as the ``height'' of the sequence
(as in the height of a growth chart of a stock). Regret has been studied
in a number of papers, including \cite{cover-binary,weighted-majority,cover-portfolios,auer-nonstoch,audibert-bubeck}. A classical result says that one can obtain a
regret of $\Theta(\sqrt{T})$ for a sequence of length $T$, via, say,
the weighted majority algorithm of \cite{weighted-majority}.  Note that
the payoff per time step $b_t\bt_t$ is essentially equivalent to the
well known absolute loss function $|b_t-\bt_t|$ (see for example
~\cite{plg}, chapter 8)\footnote{since when $|b_t| = 1$, $|b_t-\bt_t|
  = |b_t||b_t-\bt_t| = |1 - b_t \bt_t| = 1 - b_t \bt_t$.  Thus the
  absolute loss function is the negative of our payoff in one step
  plus a shift of $1$. Also $b_t$ values from $\{-1,1\}$ or $\{0,1\}$
  are equivalent by a simple scaling and shifting transform.}.

Obtaining a regret of $\Theta(\sqrt{T})$ has since become the golden
standard for many similar expert learning problem. But is this the
best possible guarantee? While there is a lower bound of
$\Omega(\sqrt{T})$, it is natural to ask what is the {\em
  optimal} algorithm for minimizing the regret, departing from
asymptotic notation. Note that the weighted majority algorithm may not
be optimal, even if it obtains the ``right order of magnitude''. More
generally, one can ask what exactly are all possible payoff functions
one can achieve as a function of the total payoffs of the two arms
(height, in our case).

In this paper, we undertake precisely this task, of studying the
algorithms that obtain the {\em optimal, minimal} regret possible and
characterize the possible payoff functions.  Our results also lead to
optimal regret trade-offs between two experts in the experts problem
from the equivalence between the two problems.  The latter problem has
been previously studied by \cite{kearns-regret}, and
later by \cite{KP11}, to address, say, an investment scenario where
there may be two experts one risk taking and another conservative and
one may be willing to take different regrets with respect to these two
experts.  In particular, it is known that it is possible to get regret
$O(\sqrt {T\log T})$ with respect to one expert and $1/T^{\Omega(1)}$
with respect to the other.

%, have studied the
%problem of finding a trade-off between the regret to best arm vs
%regret to the average; for 
%% For example a conservative strategy may want to
%% perform very close to the average performance of the two arms willing
%% to give up a higher regret to the best. 

There are several reasons to study such optimal algorithms and compute
the exact trade-off curves. First of all, such an optimal algorithm
may be viewed as more ``natural'', for example, because if an
autonomous system has the same optimization criteria (of minimizing
regret), it would arrive at such an ``optimal'' solution. Second, it
is worthwhile to go beyond the asymptotics of a $\Theta(\sqrt{T})$
bound. Specifically, often the final value of a sequence is actually
of the order of $\sqrt{T}$, such as for a random sequence. Although,
we do not expect to obtain a positive payoff for a random sequence, a
large fraction of all sequences still have $O(\sqrt{T})$ value. In
such a scenario, it is critical to obtain the best possible constant
in front of the $\sqrt{T}$ regret bound. When the value of the
sequence is indeed around $\Theta(\sqrt{T})$, an algorithm with a
regret of $\Theta(\sqrt{T})$ achieves a constant factor approximation,
and improving the leading constants leads to an approximation factor
which is a better constant. For example, in several investment
scenarios it is known that the payoffs of the experts (or stocks) in
time $T$ is barely more than $O(\sqrt T)$ (see, for example, the Hurst
coefficient measurements of financial markets in \cite{BPS-sp500,S-fractional}).  In such settings, the precise constant in regret
term can translate into a difference between gain and loss.  Indeed,
we find that our algorithm can have a regret that is about $10\%$
lower than that of the well known weighted majority algorithm and, at
several positions on the curve, our payoff is improved by as much as
$0.3 \sqrt T$ (see figure \ref{fig:bettingStrategies}).  We also
obtain the exact trade-off curve between the regrets with respect to
two arms (see figure \ref{fig:tradeoff}).

We note that, in the vanilla setting, when there is a time bound $T$,
the solution already follows from the results of \cite{cover-binary}
(see also \cite{cesa1997use,plg}), who gave a characterization of all
possible payoffs back in 1965. One can also obtain the optimal
algorithm by computing a certain dynamic programming, similar to an
approach from \cite{Kohn-diff}. Yet, the resulting algorithm has a
betting strategy and payoff function that are {\em time-dependent} as
well as depend on the final stopping time $T$. These dependencies
introduce issues and parameters that are hard to control in reality
(often the predictor does not really know when the time ``stops''). To
understand the time-independent strategies, we are led to consider the
another classic setting of {\em time-discounted} payoffs
(see \cite{gittins,tsitsiklis}).

Thus we focus our study on regret-optimal algorithms in the
time-discounted setting, where payoff is discounted, and there is no
apriori time bound.
%In particular, in the second scenario, we work with the discounted
%version of the prediction problem, where all the quantities discussed
%are discounted with time at a certain rate. 
Formally, we define a {\em $\rho-$discounted} version of payoff at
some moment of time $T$, for a discount factor $\rho\in (0,1)$, as
$$ A_T^\rho=\sum_{t \ge 0} b_{T-t}\bt_{T-t} \cdot \rho^{t}
$$ The question then is to minimize the regret with respect to this
quantity, as a function of (discounted) height.  One can also see this
scenario as capturing the situation where we care about a certain
``attention'' window of time (given by $\rho$).  One of the
consequences of our study is that, when the strategies are
time-independent, the characterization of the optimal
regret/algorithms becomes quite different.

%% in both settings where the payoffs
%% may or may not be time discounted. We start by looking at the vanilla
%% setting (non time discounted) described above. We observe that optimal
%% regret bounds follow from a classical result of Cover
%% \cite{cover-binary}, who gave a characterization of all possible
%% payoffs back in '65. One can then obtain the optimal algorithm by
%% computing a certain dynamic programming, similar to an approach from
%% \cite{Kohn-diff}. We note that the resulting algorithm has a betting
%% strategy and payoff function that are {\em time-dependent} as well as
%% depend on the final stopping time $T$. These dependencies introduce
%% issues and parameters that are hard to control in reality (often the
%% predictor does not really know when the time ``stops''). To understand
%% the time-independent strategies, we are led into considering the
%% another classic setting of {\em time-discounted} payoffs
%% \cite{gittins, tsitsiklis}.

%Please see Appendix \ref{apx:relatedWork} for additional related work.

\subsection{Statement of Results}

In general, we study the optimal regret {\em curves}. Namely, we
measure the payoff and regret as a function of the ``height'' of the
sequence (the sum of the bits of the sequence, as defined above; one
can also take a discounted sum).  Note that comparing against height
amounts to comparing the performance of our algorithm against that of
two static experts: one that always predicts +1 (``long the stock''), and
another that always predicts -1 (``short the stock''). The former obtains
a payoff equal to the height and the latter obtains a payoff equal to
negative height.

We use the notion of a {\em payoff function} --- a real function
$f$, which assigns algorithm's payoff $f(x)$ for each height value
$x$. In particular, for fixed algorithm and a height $x$, let $f_T(x)$
denote the minimum payoff over all sequences with height $x$ at time
$T$. For a certain function $f_T$, we will say that $f_T(x)$ is
feasible if there is an algorithm with payoff at least $f_T(x)$
over all possible sequences $\{b_t\}$ such that $h(\{b_t\}) = x$. In
the discounted scenario, the notion of height becomes the {\em
  discounted height}: $h_T^\rho(\{b_t\}_{t\le T})=\sum_{t\ge0}
b_{T-t} \rho^{t}$. More importantly, for time-independent strategies (in
the discounted setting) we will say that $f(x)$ is feasible if the
payoff is at least $f(x)$ for (discounted) height $x$ at {\em all
  times} (feasible in steady state).

%% The intuition is that an algorithm wants to obtain a payoff function
%% which is a ``high'' as possible. As is common, we are comparing the
%% payoff function to the ``ideal'' one between two choices (experts),
%% which correspond to two constant algorithms: first one, where $g_t=+1$
%% and the second one where $g_t=-1$. Note that the corresponding
%% algorithms have payoff functions of $f(x)=x$ and $f(x)=-x$. Thus, we
%% would like to compare to the ``optimal'' payoff function $OPT(x)=|x|$,
%% which corresponds to the ``optimal in hindsight'' algorithm, if one is
%% to choose between these two ``ideal'' algorithms.

Our goal will be to optimize the regret, defined for a payoff
function $f$, as follows:
$$
R(f)=\max_{x}|x|-f(x),
$$
where $x$ ranges over all possible (discounted) heights.

%% It is already known that the best possible regret is
%% $\Theta(\sqrt{T})$. Our goal here is to obtain the exact value of the
%% regret. This is important in several applications since $OPT$ at time
%% $T$ is proportional to $\sqrt{T}$, as opposed to $T$ (saying that the
%% height is proportional to $\sqrt{T}$). Thus, the ability to extract
%% positive payoff critically depends on the exact constant in front of
%% the $\sqrt{T}$ factor in the regret.

Note that $|x|$ is the maximum of the payoff of the two constant experts.
In general, we allow bets $\bt_t$ to be bounded reals in the interval
$[-1,1]$. In such a case, it is sufficient to consider deterministic
strategies only. One can also consider the version of the problem when
there is no restriction on the range of values for $\bt_t$. We will
refer to this case as the {\em sequence prediction problem with
  unbounded bets}. This will be useful in deriving bounds for the
standard case with bounded bets.

For starters, we remind the result for the vanilla, non-discounted,
fixed stopping time setting, which follows from \cite{cover-binary},
and is related to Rademacher complexity of the predictions of the two
experts (see \cite{cesa1997use,plg}). The theorem below also extends to the
discounted scenario, with fixed stopping time $T$. See Appendix
\ref{apx:discountedFixed} for discussion of this settings.

\begin{theorem} 
\label{thm:vanillaRegret}
Consider the problem of prediction of binary sequence. The minimal
possible regret is
$$ R=\min_f R(f)= \sqrt{\frac{2}{\pi}} \sqrt{T}+O(1).
$$ There is a prediction algorithm (betting strategy) achieving this
optimal regret and has $f(x) = |x|-R$. The actual corresponding
betting strategy may be computing via dynamic programming.

Furthermore, $f$ is feasible iff $\sum f(x) p(x)=0$ where $p(x)$ is
the probability of a random walk of length $T$ to end at $x$ (i.e.,
$\E{}{f(x)}=0$ for $x$ being the height of a random sequence). For
bounded bet value, we have the additional constraint that $f$ is
$1$-Lipschitz.
\end{theorem}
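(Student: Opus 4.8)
The plan is to reduce the problem to a statement about functions on a length-$T$ random walk, via a martingale (backward induction) argument that goes back to Cover. First I would set up the dynamic program: define $f_T(x)$ to be the best payoff guaranteeable at height $x$ when the stopping time $T$ has been reached, and more generally let $f_{t}(x)$ denote the best payoff guaranteeable with $T-t$ steps remaining, starting from current height $x$. The terminal condition is $f_T(x)=x$ restricted to... no, more carefully, the value we are trying to certify is a fixed target function $f$, so I would instead argue directly about feasibility. The key recursive observation is that a payoff level is achievable with one step remaining iff the player can choose a bet $\bt\in[-1,1]$ so that for both $b\in\{-1,+1\}$ the continuation value at height $x+b$ plus the one-step payoff $b\bt$ is at least the required amount. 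Writing this out, the bet splits the ``slack'' between the two children, and the optimal bet equalizes the two constraints; this shows that the best achievable $f$ after one backward step from a pair of values $(g(x-1),g(x+1))$ is exactly the average $\tfrac12(g(x-1)+g(x+1))$ when $g$ is $1$-Lipschitz (so that the bet needed stays in $[-1,1]$), and is governed by the Lipschitz bound otherwise. Iterating this $T$ times, $f$ is feasible from height $0$ exactly when $f$ averaged against the binomial/random-walk distribution $p(x)$ at time $T$ is nonnegative, and for the optimal (regret-minimizing) $f$ this holds with equality: $\sum_x f(x)p(x)=0$, equivalently $\E{}{f(x)}=0$.

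Next I would plug in the specific affine form $f(x)=|x|-R$ and solve for the largest $R$ consistent with the feasibility constraint $\E{}{|x|-R}=0$, i.e.\ $R=\E{}{|x|}$ where $x=\sum_{t\le T}b_t$ ranges over the height of a uniformly random $\pm1$ sequence. This is just the mean absolute value of a symmetric random walk after $T$ steps, and the standard estimate (de Moivre--Laplace / the known closed form $\E{}{|S_T|}$ for a simple random walk) gives $\E{}{|S_T|}=\sqrt{2T/\pi}+O(1)$, yielding $R=\sqrt{2/\pi}\sqrt{T}+O(1)$. To see this $f$ is regret-optimal, I would observe that $R(f)=\max_x |x|-f(x)=\max_x R=R$ for this choice, and that no feasible $f$ can do better: if some feasible $g$ had $R(g)<R$ then $g(x)\ge |x|-R(g)>|x|-R$ for all $x$, so $\E{}{g(x)}>\E{}{|x|-R}=0$, and I would need the (easy) converse direction of the feasibility characterization — that $\E{}{g}\le 0$ is \emph{forced} for any feasible $g$ — to get a contradiction. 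That converse is exactly the ``only if'' part of the last sentence of the theorem and follows from the same one-step averaging identity, since each backward averaging step preserves the expectation against the appropriately-updated walk distribution, never increasing it in the Lipschitz regime and strictly decreasing it otherwise.

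The main obstacle I anticipate is handling the Lipschitz constraint cleanly in the recursion: the clean ``$f\mapsto$ average of its two neighbors'' description of one backward step is only valid while the running value function stays $1$-Lipschitz, and one must check that starting from a $1$-Lipschitz $f$ this property is preserved under the backward averaging (it is, since averaging is a contraction in the relevant sense), and separately that the optimal bet stays in $[-1,1]$ throughout — this is what pins down both the necessity of the $1$-Lipschitz condition for bounded bets and the fact that for \emph{unbounded} bets one can drop it. A secondary technical point is the $O(1)$ error term: getting the precise constant $\sqrt{2/\pi}$ requires the second-order Stirling/normal-approximation estimate for $\E{}{|S_T|}$ rather than a crude bound, and one should be slightly careful about parity of $T$ (whether the walk can actually sit at $0$), which only affects lower-order terms. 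Finally, I would remark that the explicit betting strategy is read off from the dynamic program: at height $x$ with $T-t$ steps left, bet the amount that equalizes the two continuation constraints, which for the optimal affine $f$ is a simple closed-form expression in terms of the incremental expected absolute displacement of the remaining walk.
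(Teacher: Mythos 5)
Your proposal is correct and takes essentially the same route as the paper: a backward-induction/dynamic-programming argument yielding the Cover-type characterization (expectation of $f$ under the length-$T$ random-walk distribution equal to zero, plus the $1$-Lipschitz condition to keep the equalizing bet in $[-1,1]$), followed by plugging in $f(x)=|x|-R$ and the normal-approximation estimate $R=\E{}{|x|}=\sqrt{2/\pi}\sqrt{T}+O(1)$, with the betting strategy read off from the same dynamic program. The only blemish is a sign slip mid-proposal: feasibility forces the expectation of $f$ to be \emph{nonpositive} (zero at the optimum), not nonnegative, which is the direction your own optimality argument correctly uses later.
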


\paragraph{Time-independent strategies.}
Our main result is for optimal regret curves in the setting of discounted and
time-independent strategies.
%, which we consider the most interesting and natural scenario. 
We characterize the set of all-time feasible
$f$'s. For this, we define a certain ``optimal'' curve function, which
will be central to our claims. For constants $c_1,c_2$, define the
following function:

$$ F_{c_1,c_2} (x)=c_1 \left(x\cdot\erfi(x)-e^{x^2}/\sqrt{\pi}\right) + c_2 x,$$ 

where
$\erfi(x)=i\cdot \erf(ix)$ is the imaginary error function. We also
define $\hat F_{c_1,c_2}$ to be the function obtained by bounding the
derivative of $F$ to lie in $[-1,1]$. That is $\hat F = F$ when $|F'|
\le 1$ and $\hat F' = \sign(F')$ when $|F'| > 1$.

\begin{theorem}[Main]
\label{thm:ffeasible}
Consider the problem of discounted prediction of binary sequence with
the discount factor of $\rho=1-1/n$ (corresponding to a ``window
size'' of $n$). A payoff function $f$ is feasible in the steady state if
there exist constants $c_1, c_2$ such that for all $x\in[-n,n]$:
$$f(x)\le 
% \bar{f}_{c_1,c_2}^\rho - O(1)$, where
%  $$\bar{f}_{c_1,c_2}^\rho(x)
\sqrt{n}\cdot \hat F_{c_1,c_2}(x/\sqrt{n})-O(1).$$

Conversely, if there exists a function $g$ such that $f(x)=\sqrt
n\cdot g(x/\sqrt n)$ for infinitely many $n$ and $g$ is piecewise
analytic\footnote{In fact, it suffices to assume that the first three
  derivatives of $g$ exist instead of requiring it to be analytic.}
then $g(x) \le \hat F_{c_1,c_2}(x)$ for some constants $c_1, c_2$.

%For all such functions $\bar{f}_c^\rho$, there exists an algorithm
%that has payoff function $\bar{f}_c^\rho$ and the payoff at all times
%is independent on the actual nature sequence $\{b_t\}$ (algorithm's
%path depends only on the current height $x$). 

Hence, the minimum $\rho$-discounted regret is, for
$C=\min_{\alpha\ge1} \tfrac{1}{\sqrt{\pi}}\cdot
\tfrac{\alpha}{\erfi(\sqrt{\ln\alpha})}$:
$$
\min_{f_\rho} R(f_{\rho}) = C \sqrt{n} + O(1).
$$
\end{theorem}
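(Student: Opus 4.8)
The first step is to reduce all‑time feasibility in the discounted, time‑independent setting to a pointwise (finite‑difference) inequality on the payoff function. A time‑independent strategy is a betting function $\beta:\R\to[-1,1]$; since both $A_T^\rho$ and $h_T^\rho$ are geometric sums, one step sends the pair $(A,h)$ of current payoff and discounted height to $(\rho A+b\,\beta(h),\,\rho h+b)$ for $b\in\{-1,+1\}$. Hence the invariant ``$A\ge f(h)$ at all times and for all sequences'' can be maintained precisely when, for every $h$, there is $\beta(h)\in[-1,1]$ with $\rho f(h)+b\,\beta(h)\ge f(\rho h+b)$ for $b=\pm1$, i.e.\ precisely when
\[
f(\rho h+1)+f(\rho h-1)\le 2\rho\, f(h)\qquad\text{and}\qquad f(\rho h\pm1)-\rho f(h)\le 1,
\]
together with the boundary condition $f(0)\le0$ so the process can start at $(A,h)=(0,0)$. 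I would then work entirely with these inequalities under the scaling $f(h)=\sqrt n\,g(h/\sqrt n)$, $\rho=1-1/n$.

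Next, pass to the asymptotic limit. With $x=h/\sqrt n$, so $(\rho h\pm1)/\sqrt n=x-x/n\pm1/\sqrt n$, a second‑order Taylor expansion of $g$ turns (after multiplying by $n$ and letting $n\to\infty$) the first inequality into the Hermite differential \emph{inequality} $L g:=g''-2x g'+2g\le0$ and the second into $|g'|\le1$; the extremal $g$ satisfies $Lg=0$. Using $\erfi'(x)=\tfrac2{\sqrt\pi}e^{x^2}$ one checks directly that $x$ and $u(x):=x\,\erfi(x)-e^{x^2}/\sqrt\pi$ are independent solutions of $Lg=0$ (Wronskian $e^{x^2}/\sqrt\pi$), so $F_{c_1,c_2}=c_1 u+c_2 x$ is its general solution, and $\hat F_{c_1,c_2}$ — the derivative clipped to $[-1,1]$ as forced by $|g'|\le1$ — is the candidate optimal steady‑state payoff.

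For the feasibility (forward) direction I would take $\bar f(h)=\sqrt n\,\hat F_{c_1,c_2}(h/\sqrt n)-B$ with $c_1\ge0$ and a large enough constant $B$: the midpoint bet $\tfrac12(\bar f(\rho h+1)-\bar f(\rho h-1))$ lies in $[-1,1]$ because $|\hat F'|\le1$, and the two required inequalities hold because $L\hat F=0$ on $\{|F'|\le1\}$, $L\hat F\le0$ also on the clipped (linear) tails, and the remaining $O(1/n)$‑scale errors (the Taylor remainder on the interior and the contribution of the $C^1$ corners at the two clipping points, which turn out to be non‑positive) are absorbed by the $B/n$ slack; then every $f\le\bar f-O(1)$ is feasible. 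For the converse, let $g$ be piecewise smooth with $Lg\le0$ and $|g'|\le1$. The substitution $g=x\,v$ removes the zeroth‑order term of $L$ (it becomes $x v''+2(1-x^2)v'$), and an integrating‑factor computation ($\mu=x^2e^{-x^2}$) shows that $Q_g(x):=e^{-x^2}\big(x g'(x)-g(x)\big)$ is non‑increasing on $(0,\infty)$ and non‑decreasing on $(-\infty,0)$; since $|g'|\le1$ forces $Q_g\to0$ at $\pm\infty$, we get $Q_g\ge0$ everywhere (and in particular $g(0)\le0$). Choosing $c_1=-\sqrt\pi\,g(0)\ge0$ and $c_2=g'(0)$, the difference $d:=g-F_{c_1,c_2}$ satisfies $d(0)=d'(0)=0$ and $Ld\le0$ on the interval where $F$ is unclipped; hence $Q_d\le0$ there, so $d(x)/x$ is non‑increasing with limit $0$ at the origin, which forces $d\le0$ on that interval, and $|g'|\le1$ propagates $g\le\hat F_{c_1,c_2}$ onto the linear tails.

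Finally, the regret value follows by optimising $\hat F$: by the characterisation, $\min_{f_\rho}R(f_\rho)=\sqrt n\cdot\min_{c_1,c_2}\max_x\bigl(|x|-\hat F_{c_1,c_2}(x)\bigr)+O(1)$. For the upper bound take $c_2=0$: with $x_*$ defined by $c_1\erfi(x_*)=1$, the function $|x|-\hat F_{c_1,0}(x)$ increases on $[0,x_*]$ and equals the constant $x_*-F_{c_1,0}(x_*)=e^{x_*^2}/(\sqrt\pi\,\erfi(x_*))$ for $|x|\ge x_*$, and substituting $\alpha=e^{x_*^2}$ and minimising over $x_*>0$ gives $C$. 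For the lower bound, for any $c_1\ge0$, $c_2$ let $x_\pm$ be the two clipping points; on the linear tails $|x|-\hat F_{c_1,c_2}(x)$ equals the constants $c_1 e^{x_\pm^2}/\sqrt\pi$, and since $\erfi(x_+)-\erfi(x_-)=2/c_1$ and $a-b\le2\max(|a|,|b|)$ we get $\erfi(m)\ge1/c_1$ with $m=\max(|x_+|,|x_-|)$, so $\max_x(|x|-\hat F_{c_1,c_2}(x))\ge c_1 e^{m^2}/\sqrt\pi\ge e^{m^2}/(\sqrt\pi\,\erfi(m))\ge C$. The main obstacle is the converse: finding the right monotone quantity $Q_g$ for the Hermite operator, whose $+2g$ term has the wrong sign for a naive maximum principle — which is exactly why the $g=xv$ substitution is needed — and then stitching together the Hermite region and the clipped tails while controlling the corners allowed by ``piecewise'' smoothness, where one must invoke the discrete inequality (which in particular rules out the convex corners that would destroy the monotonicity of $Q_g$) rather than $Lg\le0$.
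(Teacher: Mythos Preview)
Your proposal is correct and follows essentially the same route as the paper. The reduction to the recursive inequality $2\rho f(h)\ge f(\rho h+1)+f(\rho h-1)$ with $f(0)\le0$ and the Lipschitz constraint is the paper's Lemma~\ref{lem:inequality}; the Taylor passage to the Hermite inequality $g''-2xg'+2g\le0$ with $|g'|\le1$ is the paper's next two lemmas; your forward direction via the $-B$ shift absorbing the $O(1/n)$ Taylor remainder is exactly Lemma~\ref{lem:diffeqnerror}; and your key monotone quantity $Q_g(x)=e^{-x^2}(xg'-g)$ is the integrating-factor form of the paper's substitution $u=xh'-h$ (the paper applies it to $h=g-F$ with matched initial data, which is your $d$).

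Two places where you go a bit further than the paper, both sound: (i) you extract $c_1\ge0$ by using $|g'|\le1$ to force $Q_g\to0$ at $\pm\infty$, hence $Q_g\ge0$ and $g(0)\le0$; the paper's converse lemma only concludes $g\le F_{c_1,c_2}$ without pinning down the sign of $c_1$, and then handles the clip separately. (ii) Your explicit upper/lower bound computation for the constant $C$ (via $x_\pm$, the identity $\erfi(x_+)-\erfi(x_-)=2/c_1$, and $\max(|\erfi(x_+)|,|\erfi(x_-)|)\ge1/c_1$) is more detailed than what the paper spells out in Section~\ref{sec:timeIndep}; the symmetric $c_2=0$ optimum also appears in the paper's Lemma~\ref{lem:tradeOffDer}. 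Your remark about invoking the discrete inequality at corners to exclude convex kinks is a genuine subtlety the paper glosses over with the phrase ``piecewise analytic''.
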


We note that the above characterization follows from a ``limit view''
of the corresponding dynamic programming characterizing the payoff
function, which leads to a differential equation formulation of the
question. Such an approach has been previously undertaken by
\cite{Kohn-diff} to show that many differential equations can be
realized as two-person games, as is also the case in our scenario.

In particular, to prove Theorem~\ref{thm:ffeasible}, we show that $f$
needs to satisfy the inequality \beq\label{recinequality} f(x) \ge
\frac{f(\rho x+1) + f(\rho x-1)}{2\rho}. \eeq

It turns out that, after the correct rescaling, and taking the process
to the limit, we obtain a differential equation. Namely, let $g(x) =
f(\sqrt n x)/\sqrt n$ denote a normalized version of $f$ where the
axes are scaled down by a factor or $\sqrt n$ (the standard deviation
of the height). We will assume that $g$ is (piece-wise) analytic\footnote{In
fact all we will need is that it is twice differentiable.}. Then, as
$n$ approaches infinity, the above inequality implies the following
differential inequality:
$$
g'' - 2xg' + 2g \le 0
$$ If we replace the inequality by equality, we obtain the Hermite
differential equation which has as its solution the aforementioned
functions $F_{c_1,c_2}$. While our solutions are close to these
differential equation solutions $F_{c_1,c_2}$, we also point out the
curious fact that if we insist on the constraint \eqref{recinequality}
being an equality, then the only solution is $f(x)=0$. Thus the
relaxation into an inequality seems necessary to capture the feasible
set of functions.

The algorithm from the above theorem is explicitly given. In
particular, it computes the current discounted height $x$, and then
outputs the bet $\tilde b(x)=\frac{f(\rho x+1) - f(\rho x-1)}{2}$ for the next time step, for $f$
from Theorem \ref{thm:ffeasible}. Surprisingly, the
characterization of the feasible payoff functions $f$ is very
different when the strategies are time-independent as opposed to the
time-{\em dependent} case.  In particular, in the time-independent
case, there are only two degrees of freedom as compared to the
time-dependent case when there were infinite (or $\approx n$) degrees
of freedom.

See figure \ref{fig:bettingStrategies} for the plots of the resulting
betting strategy as compared to the one resulting from the
multiplicative weights update algorithm (which also happens to be a
time-independent strategy). Also, see figure \ref{fig:curves} for the
resulting payoff function $f$ (where the axes have been scaled down by
$\sqrt{n}$). After scaling $x$ down by $\sqrt{n}$, we obtain that
$\tilde b(x)$ tends to $F'(x)$ as $n\rightarrow \infty$.

\begin{figure*}[ht]
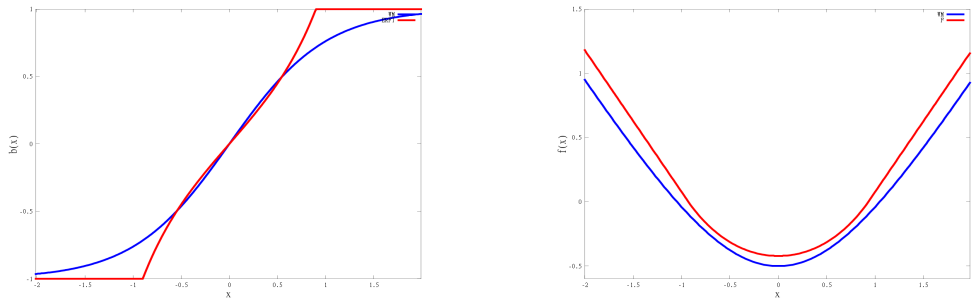

\begin{center}
\mbox{
\subfigure[\label{fig:bettingStrategies}
Scaled graphs of the betting strategies for $n=100$ for the weighted majority betting strategy $\bt(x) = \tanh(x)$ (blue) and the betting strategy resulting from Theorem~\ref{thm:ffeasible}, which is equal to $c\erfi(x)$ capped at $\pm 1$ (red). ($x$ axis has been scaled down by $\sqrt{n}$.)
]
{
\includegraphics[width=66mm, height=44mm]{betting.pdf}
}
\quad
\subfigure[\label{fig:curves}  Scaled graphs for the payoff curves $f(x)$, for $n=100$, for the weighted majority algorithm (blue) and the solution resulting from Theorem~\ref{thm:ffeasible} (red). ($x$ and $y$ axes have been scaled down by $\sqrt{n}$.)]{
\includegraphics[width=66mm, height=44mm]{curves.pdf}
}
}
\caption{Graphs for the prediction of binary sequences problem.}
\end{center}
\end{figure*}

\paragraph{Trading off regrets between two experts.}
We also relate our problem to experts problem with two experts (or the
multi-armed bandit problem in the full information model with two
arms/experts). Here, in each round, each expert has a payoff in the
range $[0,1]$ that is unknown to the algorithm. For two experts, let
$b_{1,t}, b_{2,t}$ denote the payoffs of the two experts at time
$t$. The algorithm pulls each arm (expert) with probability
$\bt_{1,t}, \bt_{2,t} \in [0,1]$ respectively where $\bt_{1,t} +
\bt_{2,t} = 1$.  The payoff of the algorithm in this setting is $A'_T
:= \sum_{t=1}^T b_{1,t}\bt_{1,t} + b_{2,t}\bt_{2,t} $. The objective
of the algorithm is to obtain low regret with respect to the two
experts. We note that this was first studied in ~\cite{kearns-regret}.

We achieve the optimal tradeoff between the regrets with respect to
the two experts by reducing it to an instance of the sequence
prediction problem. In particular, define the loss of a payoff
function $f$ as the negative of the minimum value of $f$. Then we show
that the regret/loss trade-off for the sequence prediction problem is
tightly connected to the trade-off of the regrets with respect to the
two experts.  Hence, we also derive the regret trade-off for the case
of two experts. (See figure \ref{fig:tradeoff} for the trade-off curve
for the regrets in the two experts problem.)

\begin{figure*}[ht]
\begin{center}
{
\includegraphics[width=88mm, height=55mm]{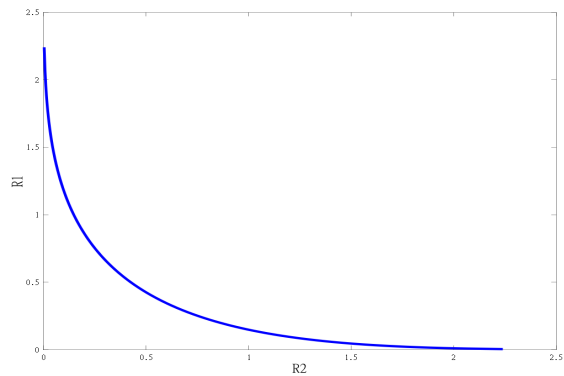}
\caption{
\label{fig:tradeoff}
Tradeoff between two regrets $R_1, R_2$ (scaled down by $\sqrt{n}$)
for the two experts problem (time-discounted case).  } }
\end{center}
\end{figure*}

\begin{theorem}\label{twoexperts}
Consider the problem of trading off regrets $R_1, R_2$ with respect to
two experts.  Regrets $R_1, R_2$ are achievable in the time-discounted
setting if and only if there exists an $\alpha > 0$ such that
$\T(\alpha R_1/\sqrt n) + \T(\alpha R_2 /\sqrt n) \ge \alpha /\sqrt
\pi$, where $\T(x) = \erfi(\sqrt{\ln x})$ for $x \ge 1$.
\end{theorem}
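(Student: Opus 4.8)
The plan is to reduce the two-experts problem to the discounted, time-independent sequence-prediction problem, and then deduce the trade-off from the characterization in Theorem~\ref{thm:ffeasible}. First I would set up the reduction. Given an instance with $b_{1,t},b_{2,t}\in[0,1]$, write the pull probabilities as $\bt_{1,t}=(1+\bt_t)/2$, $\bt_{2,t}=(1-\bt_t)/2$ with $\bt_t\in[-1,1]$, and feed $b_t:=b_{1,t}-b_{2,t}\in[-1,1]$ into a sequence-prediction instance with bet $\bt_t$. The one-line identity $b_{1,t}\bt_{1,t}+b_{2,t}\bt_{2,t}=\tfrac12(b_{1,t}+b_{2,t})+\tfrac12 b_t\bt_t$ shows that, after discounting, the common term $\tfrac12\sum_{t\ge0}(b_{1,T-t}+b_{2,T-t})\rho^t$ cancels from the difference of the algorithm's payoff and each expert's payoff, leaving $R_1=\tfrac12(h^\rho-A^\rho)$ and $R_2=\tfrac12(-h^\rho-A^\rho)$, where $h^\rho$ is the discounted height and $A^\rho$ the discounted sequence-prediction payoff. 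Consequently, an algorithm with steady-state payoff function $f$ incurs exactly $R_1=\tfrac12\max_x(x-f(x))$ and $R_2=\tfrac12\max_x(-x-f(x))$ over $x\in[-n,n]$, and conversely every achievable pair of steady-state regrets arises this way; so $(R_1,R_2)$ is achievable iff there is a feasible $f$ with $f(x)\ge\max(x-2R_1,\,-x-2R_2)$ for all $x\in[-n,n]$.

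Next I would plug in the characterization of feasible payoff functions. By Theorem~\ref{thm:ffeasible}, feasibility means (up to the $O(1)$ slack) $f(x)=\sqrt n\,\hat F_{c_1,c_2}(x/\sqrt n)$ for some constants $c_1,c_2$. Rescaling by $y=x/\sqrt n$ and $\ell_i=2R_i/\sqrt n$ turns this into a purely analytic question: for which $(\ell_1,\ell_2)$ is there a choice of $(c_1,c_2)$ with $\hat F_{c_1,c_2}(y)\ge\max(y-\ell_1,\,-y-\ell_2)$ for all $y$? Here one uses that $F_{c_1,c_2}'(y)=c_1\erfi(y)+c_2$ (the $y\erfi'(y)$ and $-2ye^{y^2}/\sqrt\pi$ terms cancel), so for $c_1>0$ the function $F_{c_1,c_2}$ is convex, $\hat F_{c_1,c_2}$ is linear with slope $+1$ on $y\ge y_+$ and slope $-1$ on $y\le y_-$ where $c_1\erfi(y_\pm)+c_2=\pm1$, and by convexity it stays above both asymptotes in the middle. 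Hence the best $\hat F$ for given $(\ell_1,\ell_2)$ is the one \emph{tangent} to $y-\ell_1$ on the right and to $-y-\ell_2$ on the left, and the Pareto frontier is traced out by these doubly-tangent curves.

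Finally I would extract the curve. Tangency on the right forces $\hat F(y)=y-c_1e^{y_+^2}/\sqrt\pi$ for $y\ge y_+$ (using $F_{c_1,c_2}(y_+)=y_+-c_1e^{y_+^2}/\sqrt\pi$, which follows from $c_1\erfi(y_+)+c_2=1$), so on the frontier $\ell_1=c_1e^{y_+^2}/\sqrt\pi$; symmetrically $\ell_2=c_1e^{y_-^2}/\sqrt\pi$; and subtracting the two slope equations gives $c_1(\erfi(y_+)-\erfi(y_-))=2$. Substituting $\alpha_1=e^{y_+^2}$, $\alpha_2=e^{y_-^2}$ (so $y_+=\sqrt{\ln\alpha_1}>0$, $y_-=-\sqrt{\ln\alpha_2}<0$, with $\alpha_1,\alpha_2\ge1$ so that $\T$ is defined) and using that $\erfi$ is odd, these three relations collapse to $\alpha_2/\alpha_1=\ell_2/\ell_1$ and $\T(\alpha_1)+\T(\alpha_2)=2\alpha_1/(\sqrt\pi\,\ell_1)$. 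Writing $\alpha:=\alpha_1\sqrt n/R_1=\alpha_2\sqrt n/R_2$ and recalling $\ell_i=2R_i/\sqrt n$, this is exactly $\T(\alpha R_1/\sqrt n)+\T(\alpha R_2/\sqrt n)=\alpha/\sqrt\pi$; allowing $\hat F$ merely to lie above the two lines (strictly more regret than the frontier) turns the equality into the ``$\ge$'' of the statement.

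The hard part, I expect, will not be the algebra above but the boundary case analysis: when one of $R_1,R_2$ is small, the tangency point $y_\pm$ is pushed to the wrong side of the origin or off to $\pm\infty$ and the optimal $\hat F$ degenerates (e.g. $c_1\to0$, or capping happens on only one side, corresponding to $\alpha_1$ or $\alpha_2$ approaching $1$); one must check these degenerate profiles are still captured by the ``$\exists\,\alpha$'' condition, and invoke the converse (piecewise-analytic) direction of Theorem~\ref{thm:ffeasible} to rule out any feasible $f$ doing better. A secondary, bookkeeping point is making the discounted steady-state regret definitions precise enough that the identities $R_i=\tfrac12\max_x(\pm x-f(x))$ hold on the nose up to the stated $O(1)$ terms.
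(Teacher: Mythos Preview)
Your proposal is correct and follows the paper's overall strategy: reduce the two-experts problem to the discounted sequence-prediction problem, invoke the $\hat F_{c_1,c_2}$ characterization from Theorem~\ref{thm:ffeasible}, and read off the tangency conditions. The algebra you carry out (computing $F(y_\pm)=\pm y_\pm-c_1e^{y_\pm^2}/\sqrt\pi$ at the capping points, then eliminating $c_1,c_2$ to land on $\T(\alpha R_1/\sqrt n)+\T(\alpha R_2/\sqrt n)=\alpha/\sqrt\pi$) is exactly the computation in the paper's Lemma~\ref{lem:tradeOffDer}.

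The one genuine difference is in the reduction (Lemma~\ref{lem:tradeOffReduction} in the paper). You map the two-experts instance to the \emph{standard} two-sided prediction problem via $b_t=b_{1,t}-b_{2,t}$ and $\bt_{i,t}=(1\pm\bt_t)/2$, obtaining $R_1=\tfrac12\max_x(x-f(x))$ and $R_2=\tfrac12\max_x(-x-f(x))$; you then handle the asymmetry between $R_1$ and $R_2$ through the parameter $c_2$ in $\hat F_{c_1,c_2}$ (capping $F'$ in $[-1,1]$). The paper instead introduces a separate \emph{one-sided bets} variant ($\bt_t\in[0,1]$, capping $F'$ in $[0,1]$), sets $b_t=b_{2,t}-b_{1,t}$, $\bt_{2,t}=\bt_t$, and identifies $R_1$ with the loss $L_o$ and $R_2$ with the regret $R_o$ of that variant. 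Your route is arguably more self-contained, since it applies Theorem~\ref{thm:ffeasible} exactly as stated rather than appealing to an analogous (but unproved in the paper) characterization for the one-sided case; the paper's route, on the other hand, makes the loss/regret asymmetry more transparent and also yields the $R_m,R_a$ trade-off as a byproduct. The boundary cases you flag (e.g.\ $y_+$ or $y_-$ landing on the wrong side of the origin) are handled in the paper only implicitly, via the clause ``for a given $c_1\ge0$, a $c_2$ exists if and only if \ldots'', so your level of rigor on that point matches the paper's.
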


\paragraph{Multiple scales.}
Finally, we investigate the possible payoff functions at multiple time
scales $\rho$ (window sizes). Several earlier papers
considered regrets at different time scales; see \cite{blum-mansour,freund-schapire-singer-warmuth,hazan-seshadhri,vovk,KP11}.  We
consider two different time scales, $\rho_1 = 1 - 1/n_1$ and $\rho_2 =
1 - 1/n_2$, although a similar result can be obtained for a larger
number of time scales.

We exhibit the necessary and sufficient condition for a feasible
payoff function, as window size goes to infinity. In particular,
suppose that $n_1 = a_1 n$ and $n_2 = a_2 n$ where $n$ tends to
infinity. Let $x_1$ and $x_2$ be the time discounted heights for the
two different time scales. We can ask if it is possible to get (time
discounted) payoff functions $f_1(x_1, x_2)$ and $f_2(x_1, x_2)$ at
time scales $n_1$ and $n_2$ respectively. Again we apply the
coordinate rescaling by $\sqrt{n}$ for both $x_1,f_1$ and $x_2,f_2$.
\begin{theorem}
\label{thm:multitime}
For $n\ge 1$, fix two windows $n_1=a_1n$ and $n_2=a_2n$. As $n$ goes
to infinity, there is are payoff function
$f_1(x_1,x_2)=\sqrt{n}g_1(x_1/\sqrt{n},x_2/\sqrt{n})$ for the discount
rate $\rho_1=1-1/n_1$ and
$f_2(x_1,x_2)=\sqrt{n}g_2(x_1/\sqrt{n},x_2/\sqrt{n})$ for the discount
rate $\rho_2=1-1/n_2$, as $n$ goes to infinity, if and only if the
following system of partial differential inequalities is satisfied:
%\begin{eqnarray*}
\\
$
E_1 \triangleq -\frac{1}{2}\left(\pd{}{x_1}+\pd{}{x_2}\right)^2 g_1 + \left(a_1^2x_1\pd{}{x_1}+a_2^2x_2\pd{}{x_2}\right) g_1 - a_1^2 \cdot g_1 \ge 0
$
\\
$
E_2 \triangleq -\frac{1}{2}\left(\pd{}{x_1}+\pd{}{x_2}\right)^2 g_2 + \left(a_1^2x_1\pd{}{x_1}+a_2^2x_2\pd{}{x_2}\right) g_2 - a_2^2 \cdot g_2 \ge 0
$
\\ 
$
E_1 + E_2 \ge \left|\left(\pd{}{x_1}+\pd{}{x_2}\right) (g_1 - g_2)\right|.
$
%\end{eqnarray*}
\end{theorem}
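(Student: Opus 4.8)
The plan is to mimic the single-scale argument behind Theorem~\ref{thm:ffeasible}, but tracking two discounted heights simultaneously and taking the diffusion limit of the resulting recursive inequality. First I would write down the steady-state feasibility condition at each of the two window sizes. As in \eqref{recinequality}, a payoff function $f_1$ for discount rate $\rho_1=1-1/n_1$ must satisfy $f_1(y) \ge \tfrac{1}{2\rho_1}\big(f_1(\rho_1 y^{+}) + f_1(\rho_1 y^{-})\big)$, except that now $y=(x_1,x_2)$ is a pair of heights and a single incoming bit $b\in\{\pm1\}$ updates \emph{both} coordinates at once: the next-step pair is $(\rho_1 x_1 + b,\ \rho_2 x_2 + b)$ for $f$-functions measured against the two clocks, so the same $b$ appears in both components (this is the key coupling). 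Thus the two-scale recursive inequalities are $2\rho_1 f_1(x_1,x_2) \ge \sum_{b=\pm1} f_1(\rho_1 x_1 + b,\ \rho_2 x_2 + b)$ and likewise for $f_2$ with $\rho$'s swapped appropriately, together with the Lipschitz-type bound linking the bet used for scale~1 versus scale~2 — this last is exactly what will produce the $E_1+E_2 \ge |(\partial_{x_1}+\partial_{x_2})(g_1-g_2)|$ constraint, since a single physical bet $\bt$ must simultaneously be the bet implied by $f_1$ and by $f_2$, and the discrepancy between the two implied bets is bounded by the slack in the two inequalities.

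Next I would perform the rescaling $x_i \mapsto x_i/\sqrt n$, $f_i \mapsto \sqrt n\, g_i$ with $n_i = a_i n$, and Taylor-expand to second order in $1/\sqrt n$. Writing $\rho_i = 1 - 1/(a_i n)$, the increment in the $i$-th (scaled) coordinate from the discount factor is $-x_i/(a_i n)\cdot$(order $1$), and from the $\pm1$ bit is $\pm 1/\sqrt n$; crucially the $\pm1$ increments to $x_1$ and $x_2$ are \emph{the same} random sign, so in the second-order term $\tfrac12\sum_b(\cdots)^2$ the Hessian gets hit by $(\partial_{x_1}+\partial_{x_2})^2$ rather than $\partial_{x_1}^2+\partial_{x_2}^2$. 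Collecting the surviving $O(1/n)$ terms, the $\rho_i$-drift contributes $a_i^2 x_i\,\partial_{x_i}$ after one clears the $1/(2\rho_i)$ prefactor (whose expansion $\tfrac{1}{2\rho_i} \approx \tfrac12(1 + 1/(a_i n))$ is what creates the $-a_i^2 g_i$ zeroth-order term), and one is left precisely with $E_i = -\tfrac12(\partial_{x_1}+\partial_{x_2})^2 g_i + (a_1^2 x_1\partial_{x_1} + a_2^2 x_2\partial_{x_2})g_i - a_i^2 g_i \ge 0$. For the "if" direction I would run this in reverse: given smooth $g_1,g_2$ satisfying the three inequalities with strict slack, the second-order Taylor expansion shows the discrete recursive inequalities hold for all large $n$ up to lower-order error, and the third inequality guarantees that at each step there is an actual bet $\bt\in[-1,1]$ consistent with both payoff functions, so the strategy is realizable; absorb the lower-order slack into the $O(1)$ as in Theorem~\ref{thm:ffeasible}.

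The main obstacle I anticipate is the coupling term and the bet-consistency constraint: unlike the single-scale case, a move does not change one height but moves both $(x_1,x_2)$ along the fixed direction $(1,1)$ (for the $\pm1$ part) while the drift acts coordinatewise, so one must be careful that the diffusion in the limit is \emph{degenerate} — it only spreads along the diagonal, which is exactly why only $(\partial_{x_1}+\partial_{x_2})^2$ appears and why the right-hand side of the third inequality involves only the directional derivative $(\partial_{x_1}+\partial_{x_2})$. Verifying that the single physical bet at each step can be chosen to satisfy the two separate recursive inequalities simultaneously — i.e.\ that the feasibility of $f_1$ and $f_2$ is not just each one's own condition but genuinely requires the joint condition $E_1+E_2\ge|(\partial_{x_1}+\partial_{x_2})(g_1-g_2)|$ — is the delicate point; I would handle it by the same device used for the $1$-Lipschitz constraint in Theorem~\ref{thm:vanillaRegret}, namely bounding the gap between the two bets $\bt^{(1)} = \tfrac12(f_1(\cdot^+) - f_1(\cdot^-))$ and $\bt^{(2)}$ by the combined slack $E_1+E_2$ and noting a common feasible bet exists exactly when that gap is nonnegative. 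The converse direction's analyticity/three-derivatives hypothesis is used, just as in Theorem~\ref{thm:ffeasible}, to justify the Taylor expansion pointwise and rule out pathological $g_i$.
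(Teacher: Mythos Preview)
Your proposal is correct and follows essentially the same route as the paper's proof: write the per-step inequalities $\rho_i f_i(x_1,x_2)+b\bt \ge f_i(\rho_1 x_1+b,\rho_2 x_2+b)$ with a single physical bet $\bt$ and a single bit $b$ updating both heights, observe that a common $\bt$ exists iff each slack $L_i\ge 0$ and $L_1+L_2\ge |R_1-R_2|$ (interval intersection), then rescale and Taylor-expand so that the coupled $\pm 1$ increments produce the degenerate second-order operator $(\partial_{x_1}+\partial_{x_2})^2$. One small remark: the paper treats the unbounded-bet case here, so your parenthetical $\bt\in[-1,1]$ is not needed, and your phrase ``that gap is nonnegative'' should read ``the slack minus the bet-discrepancy is nonnegative''---but the intended argument is the paper's.
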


We do not seem to have explicit analytical solution for the above
system of inequations, and so perhaps one would have to rely on
numerical simulations to solve it. This part is deferred to Appendix
\ref{apx:multiScale} due to space limitation.

\subsection{Related Work}
\label{sec:relatedWork}

There is a large body on work on regret style analysis for
prediction. Numerous works including ~\cite{cover-binary,cesa1997use}
have examined the optimal amount of regret achievable with respect to
multiple experts. Many of the results in this body of work can be
found in ~\cite{plg}. It is well known that in the case of static
experts, the optimal regret is exactly equal to the Rademacher
complexity of the predictions of the experts (chapter 8 in
~\cite{plg}).  Recent works, including
~\cite{abernethy2006continuous,abernethy2008optimal,mukherjee2008learning}, have extended this analysis to other
settings.  Measures other than the standard regret measure have been
studied in \cite{rakhlin2010online}. 
Also related is the NormalHedge
algorithm \cite{normalhedge}, though it differs in both the setting
and the precise algorithm. Namely, NormalHedge looks at undiscounted
payoffs and obtains strong regret guarantees to the epsilon-quantile
of best experts. We look at two experts case (where epsilon-quantile
is not applicable) and seek to obtain provably optimal regret.

Algorithms with performance guarantees within each interval have been
studied in \cite{blum-mansour,freund-schapire-singer-warmuth,vovk}
and, more recently, in \cite{hazan-seshadhri,KP11}.  The question of
what can be achieved if one would like to have a significantly better
guarantee with respect to a fixed arm or a distribution on arms was
asked before in \cite{kearns-regret,KP11}. Tradeoffs between regret
and loss were also examined in \cite{vovk-game}, where the author
studied the set of values of $a, b$ for which an algorithm can have
payoff $a OPT+b\log N$, where $OPT$ is the payoff of the best arm and
$a, b$ are constants.  The problem of bit prediction was also
considered in \cite{freund-coin}, where several loss functions are
considered. Numerous papers
(\cite{blum1997empirical,helmbold1998line,agarwal2006algorithms}) have
implemented algorithms inspired from regret syle analysis and applied
it on financial and other types of data.

\section{Time-Independent Prediction Algorithms}
\label{sec:timeIndep}

In this section we study the optimal regret and algorithms for the
time-independent strategies and regret curves. We consider the
time-discounted setting, thereby proving
Theorem~\ref{thm:ffeasible}.

As mentioned in the introduction, we consider a payoff $f$ to be
feasible if there is a prediction algorithm that achieves a payoff of
at least $f(x)$ for the discounted height $x$ at all times $t\ge
1$.  We will argue that, without loss of generality, we can assume
that the betting strategy $\tilde b(x)$ is time independent and the
payoff always dominates the function $f(x)$.

Observe that for a time independent betting function, the payoff
function it achieves is also time independent in the limit.

\begin{claim}
\label{clm:timeIndep}
If $f$ is feasible (in steady state), then there is a time-independent
betting strategy $\tilde b$ that achieves payoff function $f$.
%(close to $f$?).
\end{claim}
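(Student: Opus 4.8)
The plan is to compress feasibility of $f$ into a stationary \emph{value function} on the space of discounted heights and then read a height-indexed bet off of its Bellman recursion. Call a configuration, specified by the current discounted height $y$ and the current discounted payoff $p$, \emph{safe} if from it some continuation of bets (allowed to depend on the whole history) keeps the running payoff at least $f$ of the current height at all later times (with the usual convention that this constraint is imposed only for time steps $t\ge1$). Since the payoff recursion $A^\rho_{t}=\rho A^\rho_{t-1}+b_t\tilde b_t$ is linear in the starting value and affine in the bets, raising the starting payoff by $\Delta>0$ and replaying the same bets raises every subsequent payoff by a positive multiple of $\Delta$; hence the set of $p$ making $(y,p)$ safe is upward closed, and we may set $V(y)=\inf\{p:(y,p)\text{ is safe}\}$. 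Feasibility of $f$, witnessed by some (possibly history-dependent) algorithm, shows that the initial configuration is safe, so $V$ is finite on every height reachable from the origin; moreover $V(y)\le n$ since $A^\rho_t\le\tfrac1{1-\rho}=n$, and $V(y)\ge f(y)$ at every height that occurs at a time $t\ge1$.

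Next I would prove that $V$ satisfies the recursion: for each relevant $y$ there is a single $\tilde b(y)\in[-1,1]$ with $\rho V(y)+\tilde b(y)\ge V(\rho y+1)$ and $\rho V(y)-\tilde b(y)\ge V(\rho y-1)$. Take a continuation that is safe from $(y,V(y)+\varepsilon)$; its first bet $\beta_\varepsilon\in[-1,1]$ leaves the configuration $(\rho y\pm1,\ \rho(V(y)+\varepsilon)\pm\beta_\varepsilon)$, which must itself be safe, hence $\rho(V(y)+\varepsilon)\pm\beta_\varepsilon\ge V(\rho y\pm1)$; letting $\varepsilon\downarrow0$ along a subsequence with $\beta_\varepsilon\to\tilde b(y)$ (compactness of $[-1,1]$) yields the recursion. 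Define the stationary strategy $x\mapsto\tilde b(x)$ this way. A one-line induction (``surplus never hurts'') then finishes the argument: if $A^\rho_t\ge V(x_t)$ then, playing $\tilde b(x_t)$, $A^\rho_{t+1}=\rho A^\rho_t+b_{t+1}\tilde b(x_t)\ge \rho V(x_t)+b_{t+1}\tilde b(x_t)\ge V(x_{t+1})\ge f(x_{t+1})$; since $A^\rho_0=0\ge V(0)$, this gives $A^\rho_t\ge f(x_t)$ for all $t\ge1$. As neither the recursion nor $\tilde b$ depends on $t$, the payoff realized at any fixed height stabilizes as $t\to\infty$, so the achieved payoff function is time-independent and dominates $f$. (As a byproduct this also gives inequality~\eqref{recinequality}: feed $p=V(x)$ into the two displayed inequalities above and add.)

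I expect the delicate step to be the second one — showing that the a priori irregular function $V$ obeys a clean Bellman identity realized by one bet in $[-1,1]$ that is simultaneously good against $b_{t+1}=+1$ and $b_{t+1}=-1$. This rests on upward-closedness of the safe sets (so that ``payoff $\ge V$ at the current height'' is the correct invariant to propagate), a compactness argument to extract the stationary bet from near-optimal continuations, and some care about the transient at the origin and at the first few steps, where feasibility is vacuous and $V$ need not equal $f$; confining the heights to the compact interval $[-n,n]$ (forced by $\rho=1-1/n$) keeps these issues under control. The remaining ingredients — the linearity/coupling argument for monotonicity, the inductive surplus estimate, and the observation that a rule indexed only by the current height is automatically time-independent — are routine.
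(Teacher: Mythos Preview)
Your approach is sound and takes a genuinely different route from the paper. The paper argues by \emph{averaging time-shifts}: if a time-dependent strategy $\tilde b_t(x)$ achieves $f$ in steady state then so does each shift $\tilde b_{t-i}(x)$, and hence so does the Ces\`aro mean $\mu_t(x)=\tfrac1N\sum_{i=1}^N\tilde b_{t-i}(x)$ (the payoff against a fixed adversary sequence is linear in the bets); for $N\gg e^n$ this mean varies only exponentially little in $t$ over any $\poly(n)$ window, and discounting at rate $1-1/n$ lets one disregard what lies outside such a window. You instead build a stationary value function $V$ on heights and extract the bet from its Bellman recursion via a compactness argument. The paper's proof is short but informal (the shift-invariance of feasibility and the ``ignore outside the window'' step are asserted rather than verified); your dynamic-programming construction is more explicit, handles the transient at the origin cleanly, and delivers inequality~\eqref{recinequality} as a byproduct.

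One wrinkle in the write-up: with your definition of safe (constraints imposed only at strictly later times), the claim $V(y)\ge f(y)$ is not justified --- a configuration $(y,p)$ with $p<f(y)$ can perfectly well be safe. This is not fatal: the limit extraction in your second step actually gives both $\rho V(y)+b\,\tilde b(y)\ge V(\rho y+b)$ and $\rho V(y)+b\,\tilde b(y)\ge f(\rho y+b)$ (the latter is the time-$1$ constraint built into safeness), and the second inequality alone already yields $A^\rho_{t+1}\ge f(x_{t+1})$ in the induction without passing through $V(x_{t+1})$. Alternatively, fold $p\ge f(y)$ into the definition of safe and $V\ge f$ becomes automatic. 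The aside $V(y)\le n$ is neither obvious nor needed; finiteness of $V$ on heights reachable from the origin follows directly from feasibility.
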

\begin{proof}[Proof of Claim \ref{clm:timeIndep}]
Remember that we use discount factor of $\rho=1-1/n$, where $n\ge 1$
is the ``window'' size.  Assume there is a time-{\em dependent}
betting strategy $\bt_t(x)$ that achieves payoff at least $f(x)$ in
the steady state.  We consider the average of these betting strategies
over a long interval and argue that it changes only slightly over
time. Note that the time shifted strategy $\bt_{t-i}$ also achieves
payoff at least $f(x)$ at all times. This means that an average of a
large number of such shifted betting strategies also achieves this.
Consider the average strategy $\mu_t (x) = \frac{1}{N}\sum_{i=1}^N
\bt_{t-i} (x)$, and note it is essentially constant over a small
window for a sufficienly large $N$. For example, if we choose $N >
\exp(n)$ the differences in $\mu_t$ over a window of size $\poly(n)$
are exponentially small.  Since we are time discounting at rate
$1-1/n$, it suffices to ignore anything outside such a window of size
$\poly(n)$.
\end{proof}

We will characterize the payoff functions that are feasible for
time-independent betting strategies.

\begin{lemma}\label{lem:inequality}
If there is a time-independent betting strategy with payoff function
$f(x)$ then
\begin{equation}\label{inequality}
f(x) \ge \frac{f(\rho x+1) + f(\rho x-1)}{2\rho}.
\end{equation}
Conversely if $f$ satisfies the above inequality and $f(0) \le 0$, then
it is feasible with unbounded bets. In particular the betting strategy
$\tilde b(x) = \frac{f(\rho x+1) - f(\rho x-1)}{2}$ achieves a payoff
function at least $f$. For the bounded bets case, we need the
additional constraint that $\tilde b(x)$ computed thus satisfies
$|\tilde b(x)| \le 1$
\end{lemma}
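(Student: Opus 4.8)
The plan is to prove both implications from a single one-step analysis of how the discounted height and the discounted payoff change when a new bit arrives, followed, for the converse, by propagating a pointwise invariant by induction on time. Concretely, write $x$ for the current discounted height $h_T^\rho$; when the next bit $b\in\{-1,+1\}$ is revealed, the new discounted height is $\rho x+b$ (either $\rho x+1$ or $\rho x-1$), and — since both $h_T^\rho=\sum_{t\ge0}b_{T-t}\rho^t$ and $A_T^\rho=\sum_{t\ge0}b_{T-t}\bt_{T-t}\rho^t$ obey the recursion ``multiply the old value by $\rho$, then add the newest term'' — the new discounted payoff is $\rho\cdot(\text{old payoff})+b\cdot\tilde b(x)$, where $\tilde b(x)$ is the time-independent bet placed at height $x$. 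The factor $\rho$ multiplying the old payoff is exactly what produces the $2\rho$ in the denominator of \eqref{inequality}. Note also that $|h_T^\rho|\le\sum_{t\ge0}\rho^t=n$ and that $\rho x\pm1\in[-n,n]$ whenever $x\in[-n,n]$, so the recursion stays inside the domain of $f$.

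For the forward direction, let $\tilde b$ be a time-independent strategy whose worst-case payoff function is $f$, so every history reaching discounted height $x$ has payoff at least $f(x)$, and $f(x)$ is the infimum of these payoffs. Fix a (reachable) $x$ and $\eps>0$, and pick a history $H$ reaching $x$ with payoff at most $f(x)+\eps$. Appending $+1$ to $H$ gives a history reaching $\rho x+1$ whose payoff is $\rho\cdot(\text{payoff of }H)+\tilde b(x)\le\rho(f(x)+\eps)+\tilde b(x)$, and feasibility forces it to be $\ge f(\rho x+1)$; appending $-1$ instead gives $\rho(f(x)+\eps)-\tilde b(x)\ge f(\rho x-1)$. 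Adding these two inequalities, dividing by $2$, and letting $\eps\to0$ yields $\rho f(x)\ge\tfrac12\bigl(f(\rho x+1)+f(\rho x-1)\bigr)$, which is \eqref{inequality}.

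For the converse, given $f$ satisfying \eqref{inequality} with $f(0)\le0$, I would define $\tilde b(x)=\tfrac12\bigl(f(\rho x+1)-f(\rho x-1)\bigr)$ and prove by induction on $t$ the invariant: the discounted payoff at time $t$ is $\ge f(h_t^\rho)$. The base case is the empty history, where the height is $0$ and the payoff is $0\ge f(0)$. For the step, suppose the payoff at height $x$ is $\ge f(x)$; after a $+1$ it becomes $\ge\rho f(x)+\tilde b(x)$, and substituting the definition of $\tilde b$ together with \eqref{inequality} rewritten as $\rho f(x)\ge\tfrac12(f(\rho x+1)+f(\rho x-1))$ shows this is $\ge f(\rho x+1)$; symmetrically, after a $-1$ it is $\ge f(\rho x-1)$. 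In either case the new payoff dominates $f$ at the new height, completing the induction. This gives feasibility with unbounded bets, and adding the hypothesis $|\tilde b(x)|\le1$ gives it in the bounded-bet model as well.

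The only real care needed is in the reading of ``payoff function'' and ``feasible in the steady state,'' not in the one-step algebra. For the forward direction one must make sure the relevant heights and their successors $\rho x\pm1$ are attained by finite histories (they lie in $[-n,n]$, and the set of attainable discounted heights is dense enough for the downstream use of this lemma in Theorem~\ref{thm:ffeasible}). For the converse one should observe that, since we discount at rate $\rho=1-1/n$, the payoff at any time is determined up to an exponentially small error by the last $\poly(n)$ bits, so running the invariant forward from an effectively-empty history $\poly(n)$ steps in the past suffices to conclude the bound holds at all times in steady state. I expect these bookkeeping points, rather than the inequality itself, to be where one has to be slightly careful.
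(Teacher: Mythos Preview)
Your proposal is correct and follows essentially the same approach as the paper: derive the two one-step inequalities $\rho f(x)\pm\tilde b(x)\ge f(\rho x\pm1)$ from the payoff recursion, average them to get \eqref{inequality}, and for the converse run induction on time with the stated $\tilde b$. Your $\eps$-argument for the infimum and your remarks on reachability and steady-state bookkeeping are slightly more careful than the paper's version, which simply asserts the existence of a sequence of height $x$ with payoff exactly $f(x)$, but the substance is identical.
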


\begin{proof}
Note that since the payoff at time $t$ is $\tilde b(x_t)b$ (where $b
= b_t$), we have $\rho f(x) + b\tilde b(x) \ge f(\rho x+b)$ where $b
\in \{\pm 1\}$.  This is because at time $t-1$ there is some sequence
of height $x$ with payoff $f(x)$.  Thus, we have $\rho f(x) +
\tilde b(x) \ge f(\rho x+1)$ and, similarly, $\rho f(x) - \tilde b(x) \ge
f(\rho x-1)$. Averaging the two we get inequality \eqref{inequality}.

To prove the converse we can use induction on time $t$ to show that
the stated betting strategy achieves payoff at least $f(x)$. Clearly
at $t=0$, $x=0$ and since $f(0) \le 0$ the condition is satisfied.
Further, if the height is $x$ at time $t-1$ then at the next step the
payoff is at least $\rho f(x) + b\tilde b(x) \ge f(\rho x+b)$ for
$b\in \{-1,1\}$ which follows from the inequality \eqref{inequality}.
\end{proof}

We now proceed to proving the main claims of Theorem
\ref{thm:ffeasible}. In particular, we start by showing the ``converse''
direction. For this, we will show that, in the limit, the payoff
function has to satisfy a certain differential equation, when property
scaled. The next lemma proves precisely this switch.

\begin{lemma}
Let $g(x) = f(\sqrt n x)/\sqrt n$, and assume it is piece-wise
analytic. Then as $n \rightarrow \infty$, condition ~\eqref{inequality}
becomes
\begin{equation}\label{diffinequality}
g'' - 2xg' + 2g \le 0.
\end{equation}
\end{lemma}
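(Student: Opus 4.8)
The plan is to substitute the scaling $f(y)=\sqrt n\,g(y/\sqrt n)$ into inequality~\eqref{inequality}, clear the denominator $2\rho$, and Taylor-expand both sides in powers of $1/\sqrt n$, retaining every term down to order $1/n$. Write $x=\sqrt n\,u$ and recall $\rho=1-1/n$, so that $\rho x\pm 1=\sqrt n\,(u-u/n\pm 1/\sqrt n)$. Dividing~\eqref{inequality} through by $\sqrt n$ then turns it into
\[
2\Bigl(1-\tfrac1n\Bigr)\,g(u)\ \ge\ g\!\Bigl(u-\tfrac un+\tfrac1{\sqrt n}\Bigr)+g\!\Bigl(u-\tfrac un-\tfrac1{\sqrt n}\Bigr),
\]
which is the identity I would analyze in the limit $n\to\infty$, at a fixed $u$ lying in the interior of a smooth piece of $g$.

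The computational heart is expanding the right-hand side. Setting $a=-u/n$ and $b=1/\sqrt n$, the symmetric pair satisfies $g(u+a+b)+g(u+a-b)=2g(u+a)+b^2g''(u+a)+O(b^3)$, and then expanding $g(u+a)$ and $g''(u+a)$ about $u$ turns this into $2g(u)+2a\,g'(u)+b^2g''(u)+O(a^2)+O(ab^2)+O(b^3)$. The observation that drives everything is the mismatch of scales: $b^2=1/n$ and $a=-u/n$ are both of order $1/n$, while $a^2=u^2/n^2$, $ab^2$, $b^3$ are all of smaller order; hence of the genuinely order-$1/n$ terms only $2a\,g'(u)=-\tfrac{2u}{n}g'(u)$ and $b^2g''(u)=\tfrac1n g''(u)$ survive, the remaining error being $O(n^{-3/2})$. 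On the left side $2(1-1/n)g(u)=2g(u)-\tfrac2n g(u)$. Cancelling the common $2g(u)$, multiplying through by $n>0$, and sending $n\to\infty$ (so the $O(n^{-1/2})$ remainder vanishes) yields $-2g(u)\ge -2u\,g'(u)+g''(u)$, i.e.\ exactly~\eqref{diffinequality} once $u$ is renamed $x$. Running the same expansion in reverse shows that a $g$ obeying the strict form of~\eqref{diffinequality} obeys~\eqref{inequality} for all sufficiently large $n$, which is the implication needed for the ``converse'' part of Theorem~\ref{thm:ffeasible}.

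The only genuine obstacle is making the limit rigorous, i.e.\ bounding the Taylor remainders uniformly in $u$. This is precisely where the (piecewise) analyticity of $g$ — or, as the theorem's footnote points out, merely boundedness of $g',g'',g'''$ on the range under consideration — is used: Lagrange's form of the remainder bounds $g(u+a+b)+g(u+a-b)-2g(u+a)-b^2g''(u+a)$ by $O(b^3)\,\|g'''\|_\infty$, and the errors in expanding $g(u+a)$ and $g''(u+a)$ about $u$ by $O(a)\cdot(\|g''\|_\infty+\|g'''\|_\infty)$, so after multiplying through by $n$ every remainder is $O(n^{-1/2})$ and disappears in the limit. When $g$ is only piecewise analytic one simply restricts to the interior of a smooth piece, which loses nothing since~\eqref{diffinequality} is a pointwise statement; this is also why the conclusion is naturally phrased for piecewise-analytic $g$.
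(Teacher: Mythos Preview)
Your proof is correct and follows essentially the same approach as the paper: substitute the rescaling into~\eqref{inequality}, Taylor-expand the symmetric pair of arguments, identify the order-$1/n$ terms, and pass to the limit. Your bookkeeping of the remainders via Lagrange form is somewhat more explicit than the paper's, but the structure is identical. One minor caution: your closing remark that ``running the expansion in reverse'' yields the converse needed for Theorem~\ref{thm:ffeasible} is not part of this lemma and is too quick---the paper treats that direction separately (Lemma~\ref{lem:diffeqnerror}), where an additional $O(1/\sqrt n)$ shift of $\hat F$ is needed to absorb the Taylor error and actually recover~\eqref{inequality} for finite $n$.
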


\begin{proof}
Rescaling and setting $\delta = 1/\sqrt n$ (i.e., $\rho=1-\delta^2$) in inequality
\eqref{inequality} gives us:
$$(1-\delta^2)g(x) \ge \frac{g((1- \delta^2)x+\delta) +
  g((1-\delta^2)x-\delta)}{2}.$$

Using Taylor expansion on $g$, we obtain
\beq \nonumber (1-\delta^2)g(x) \ge
g(x) - \delta^2 x g'(x) + (1/2)\delta^2(1+\delta^2 x^2)g''(x) +
O(\delta^3) g'''(x - \delta^2x\pm\delta) \\ \nonumber 0 \ge g(x) - x
g'(x) + (1/2)(1+\delta^2 x^2)g''(x) + O(\delta) g'''(x -
\delta^2x\pm\delta) 
\eeq

As $\delta=1/\sqrt{n} \rightarrow 0$, we obtain that $g'' - 2xg' +
2g \le 0$.
\end{proof}

We note that if we replace the inequality \eqref{inequality} with the
equality, we obtain the Hermite differential equation: 
\beq
\label{eqn:Hermite}
g'' - 2xg' + 2g = 0. \eeq 
Differential equation \eqref{eqn:Hermite} has a general
solution of the form $F_{c_1,c_2} =c_1 (x\erfi(x)-e^{x^2}/\sqrt{\pi})
+ c_2 x$, where $\erfi(x)=i\cdot\erf(ix)$ is the imaginary error function.

\begin{remark}
Note that, for example, this ``limiting payoff function''
$F_{c_1,c_2}$ satisfies the ``limiting'' $T\rightarrow \infty$
characterization similar to Theorem~\ref{thm:discountedFixed}. Namely,
for any constants $c_1,c_2$, we have that $\int p_\rho(x)
F_{c_1,c_2}(x)\dx=0$, where $p_\rho(x)$ is the distribution of the
$\rho$-decayed random walk to be at height $\sqrt{n}x$, at the limit
of $n,T\rightarrow \infty$. (Note that $p_\rho$ converges to $N(0,1)$
when $n\rightarrow \infty$.)
\end{remark}

In the following, we show that the solutions for the steady-state
payoffs are essentially characterized by functions $F_{c_1,c_2}$.
Note that we thus obtain solutions that have only two degrees of
freedom. This is in stark contrast to the time-dependent strategies,
where there is an infinite number of degrees of freedom (see Appendix
\ref{apx:fixedTime}).

The next lemma shows that if $g(x)=f(\sqrt{n} x)/\sqrt{n}$ satisfies
the differential {\em inequality}, then $g$ must be dominated by
$F_{c_1, c_2}$, i.e., a solution to the Hermite differential equation.

\begin{lemma}
Suppose $g$ satisfies $g'' - 2xg' + 2g \le 0$. Then there exist some $c_1,c_2$, such that $g\le F_{c_1,c_2}$.
\end{lemma}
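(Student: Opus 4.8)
The plan is to exploit the fact that $u_1(x)=x$ is one explicit solution of the Hermite equation and to perform reduction of order. Writing $g(x)=x\,z(x)$ on $x\neq 0$, a direct computation turns $L[g]:=g''-2xg'+2g$ into $L[g]=xz''+2(1-x^2)z'$, so that with $p:=z'$ the hypothesis $L[g]\le 0$ becomes the \emph{first-order} inequality $xp'+2(1-x^2)p\le 0$. Multiplying by the integrating factor $\mu(x)=x^2e^{-x^2}>0$, whose logarithmic derivative is $2/x-2x=\tfrac{2(1-x^2)}{x}$, gives $\frac{d}{dx}\bigl(\mu(x)p(x)\bigr)=\tfrac{\mu(x)}{x}\,L[g](x)$. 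Hence $\mu p$ is non-increasing on $(0,\infty)$ and non-decreasing on $(-\infty,0)$.

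Next I would pin down the behaviour at the origin. Taylor-expanding $g$ to second order at $0$ gives $z'(x)=-g(0)/x^2+O(1)$, hence $\mu(x)p(x)=e^{-x^2}\bigl(-g(0)+O(x^2)\bigr)\to -g(0)$ as $x\to 0^{\pm}$. Combined with the monotonicity just established, this forces $\mu(x)p(x)\le -g(0)$ for \emph{all} $x\neq 0$; equivalently, since $\mu>0$, $z'(x)\le -g(0)\,e^{x^2}/x^2$ everywhere.

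Now I would compare with the candidate envelope. For $F=F_{c_1,c_2}$ one computes $F'(x)=c_1\erfi(x)+c_2$ and, setting $z_F=F/x$, $z_F'(x)=\tfrac{c_1}{\sqrt\pi}\,e^{x^2}/x^2$, equivalently $\mu z_F'\equiv c_1/\sqrt\pi$. Choosing $c_1:=-\sqrt\pi\,g(0)$ makes $\mu z_F'\equiv -g(0)$, so the bound of the previous step reads exactly $z'\le z_F'$ on $\mathbb{R}\setminus\{0\}$, i.e.\ $(z-z_F)'\le 0$ there. With this choice $F(0)=-c_1/\sqrt\pi=g(0)$, so $(g-F)/x=z-z_F$ extends continuously to $0$ with value $g'(0)-F'(0)=g'(0)-c_2$; choosing $c_2:=g'(0)$ makes that limit $0$. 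Then on $(0,\infty)$ the non-increasing function $z-z_F$ starts from $0$ at $0^+$, so $z-z_F\le 0$ and hence $g=xz\le xz_F=F$; on $(-\infty,0)$ the same function approaches $0$ from the left, so $z-z_F\ge 0$ there, and multiplying by $x<0$ again gives $g\le F$; and $g(0)=F(0)$. This yields $g\le F_{c_1,c_2}$ with $c_1=-\sqrt\pi\,g(0)$ and $c_2=g'(0)$.

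I expect the main obstacle to be the behaviour at $x=0$: the reduction $g=xz$ is singular there, so one must justify the limit $\mu p\to -g(0)$ carefully, and — more subtly — the sign of $x$ reverses the \emph{direction} of the desired inequality on the two sides of the origin, which is precisely why both free constants are consumed in matching $F$ to $g$ to first order at $0$ rather than merely producing an inequality. For the piecewise-analytic (rather than $C^2$) case, at a kink the distributional reading of $L[g]\le 0$ forces a downward jump of $g'$, hence of $p$, which is consistent with $\mu p$ being non-increasing on $(0,\infty)$ (resp.\ non-decreasing on $(-\infty,0)$), so the argument is unaffected; only the endpoint analysis at $0$ uses one-sided twice-differentiability there.
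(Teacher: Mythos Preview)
Your proof is correct and takes essentially the same approach as the paper. The paper first picks $F$ to match $g$ to first order at $0$, sets $h=g-F$, and uses the substitution $u=xh'-h$ together with the factor $e^{-x^2}$; your reduction $g=xz$, $p=z'$ with integrating factor $\mu=x^2e^{-x^2}$ is the very same manoeuvre, since $\mu(z'-z_F')=e^{-x^2}(xh'-h)$, and the constants you obtain ($c_1=-\sqrt{\pi}\,g(0)$, $c_2=g'(0)$) are exactly the paper's choice $F(0)=g(0)$, $F'(0)=g'(0)$.
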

\begin{proof}
There is a unique solution $y=F_{c_1,c_2}$ such that $y(0) = g(0)$ and $y'(0) =
g'(0)$. Now look at $h = g-y$. We will show that $h \le 0$. Observe
that $h$ satisfies $h'' - 2xh' + 2h \le 0$ and $h(0) = h'(0) = 0$.

We will make the substitution $u = xh' - h$. Hence we have that $u' = xh''$, and thus
$u'/x - 2u \le 0$. or $u' \le 2ux $ for $x\ge 0$ and $u' \ge 2ux$ for
$x \le 0$ and $u(0) = 0$. This implies that $u \le 0$. This means that
$xh'-h\le 0$ which implies $h \le 0$.
\end{proof}

So far we have ignored the condition that the $|b| \le 1$ thus
allowing unbounded bets. In the following claim, we consider the case
of bounded bets and show that in this case the function $g$ has a
bounded derivative.

\begin{claim}
With bounded bets $|b| \le 1$, the function $g$ must also satisfy the
constraint $|g'(x)| \le 1$ as $n \rightarrow \infty$.
\end{claim}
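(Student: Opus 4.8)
The plan is to start from the characterization in Lemma~\ref{lem:inequality}: for bounded bets, the betting strategy $\tilde b(x) = \tfrac{f(\rho x+1) - f(\rho x-1)}{2}$ must satisfy $|\tilde b(x)| \le 1$ for all discounted heights $x$. So I would first translate this constraint into a statement about $g$. Writing $f(y) = \sqrt{n}\, g(y/\sqrt{n})$ and $\rho = 1 - \delta^2$ with $\delta = 1/\sqrt{n}$, we have
\[
\tilde b(\sqrt n\, x) = \frac{\sqrt n}{2}\Bigl(g\bigl((1-\delta^2)x + \delta\bigr) - g\bigl((1-\delta^2)x - \delta\bigr)\Bigr).
\]

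The next step is a Taylor expansion of the right-hand side around the point $(1-\delta^2)x$, or equivalently around $x$ (the $O(\delta^2)$ shift in the center only contributes lower-order terms once multiplied by $\sqrt n = 1/\delta$). The symmetric difference $g(z+\delta) - g(z-\delta) = 2\delta g'(z) + O(\delta^3) g'''$, so
\[
\tilde b(\sqrt n\, x) = \frac{1}{2\delta}\Bigl(2\delta g'(x) + O(\delta^2)\,g''(x) + O(\delta^3)\,g'''\Bigr) = g'(x) + O(\delta).
\]
Hence $|\tilde b(\sqrt n\, x)| \to |g'(x)|$ as $n \to \infty$, and the constraint $|\tilde b| \le 1$ forces $|g'(x)| \le 1$ in the limit. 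That is essentially the whole argument; the claim is really just bookkeeping on the Taylor expansion already carried out in the previous lemma, now applied to the \emph{difference} $f(\rho x + 1) - f(\rho x - 1)$ rather than the \emph{sum}.

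The one point that needs a little care — and is the only place the argument could go wrong — is the direction of the implication and the role of the ``$n \to \infty$'' qualifier. The betting strategy $\tilde b$ of Lemma~\ref{lem:inequality} is the \emph{specific} one that realizes $f$; a priori a different strategy might realize the same $f$ with smaller bets. But in fact the inequality $\rho f(x) + b\,\tilde b(x) \ge f(\rho x + b)$ holding with equality for the extremal sequences (which is what drives the converse of Lemma~\ref{lem:inequality}) pins down $\tilde b(x)$ up to the slack, so any feasible strategy has $\tilde b(x) \ge \tfrac{f(\rho x+1)-f(\rho x-1)}{2}$ on one side and $\le$ on the other after averaging — so the bet magnitude is at least $|g'(x)| - o(1)$ regardless. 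I would state this cleanly: if $f$ is feasible with bounded bets, then \emph{every} realizing strategy has $|\tilde b(x)| \ge \bigl|\tfrac{f(\rho x+1)-f(\rho x-1)}{2}\bigr|$ at the extremal heights, and the right side tends to $|g'(x)|$, so $|g'(x)| \le 1$. The main (minor) obstacle is thus just making the ``every realizing strategy'' quantifier rigorous rather than the analysis itself, which is a routine second-order Taylor estimate identical in spirit to the one in the preceding lemma.
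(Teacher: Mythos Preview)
Your Taylor computation is fine, but the logical link to the bounded-bets hypothesis has a gap precisely where you flagged it. The formula $\tilde b(x) = \tfrac{f(\rho x+1)-f(\rho x-1)}{2}$ is only the \emph{sufficient} betting strategy from Lemma~\ref{lem:inequality}; for the necessary direction, all you know is that \emph{some} strategy $\tilde b$ with $|\tilde b|\le 1$ satisfies $\rho f(x)+b\tilde b(x)\ge f(\rho x+b)$. This pins $\tilde b(x)$ to the interval $[L,U]$ with $L=f(\rho x+1)-\rho f(x)$ and $U=\rho f(x)-f(\rho x-1)$, whose midpoint is indeed your quantity. But your assertion that ``every realizing strategy has $|\tilde b(x)|\ge|\text{midpoint}|$'' is false: if $L=0$ and $U=3$ then $\tilde b=0.5$ is feasible while the midpoint is $1.5$. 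What you actually need is that the slack $U-L$ is $o(1)$, so that every $\tilde b\in[L,U]$ is forced to $g'(x)+o(1)$; this is true (it is $O(\delta)$ by the same expansion as in the preceding lemma), but you never state or verify it.

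The paper sidesteps this entirely with a shorter argument. From $\tilde b(x)\le 1$ and $\rho f(x)+\tilde b(x)\ge f(\rho x+1)$ one gets directly $\rho f(x)+1\ge f(\rho x+1)$, i.e.\ $L\le 1$; rescaling, $(1-\delta^2)g(x)+\delta\ge g((1-\delta^2)x+\delta)$, which rearranges to a one-sided difference quotient bounded by $1-\delta g(x)$ and gives $g'(x)\le 1$ as $\delta\to 0$. The companion inequality from $\tilde b\ge -1$ gives $-g'(x)\le 1$. This uses only the endpoint $L$ (not the midpoint), so no slack estimate is needed.
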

\begin{proof}
For $\delta=1/\sqrt{n}$, we have that
\beq 
\nonumber
(1-\delta^2) g(x) + \delta \ge g((1-\delta^2)x+\delta) 
\implies -\delta
g(x) + 1 \ge \frac{g((1-\delta^2)x+\delta) - g(x)}{\delta} 
\eeq

Considering $\delta \rightarrow 0$ gives $g'(x) \le 1$. Similarly we
get $-g'(x) \le 1$.
\end{proof}

Suppose we choose a solution $g = F_{c_1,c_2}$, this would correspond to
the betting strategy $b(x) = c_1\cdot \erfi(x) + c_2$. Note that $F$
doesn't satisfy $|g'(x)| \le 1$, but a simple capping of its growth
when $|F'| \ge 1$ gives a alternate function $\hat F$ (see figure
\ref{fig:curves}) that satisfies the extra condition. This essentially
corresponds to capping $b(x)$ so that $|b(x)| \le 1$. Let $\hat b(x)$
denote the capped version of $b(x)$ that can be used for bounded bets.

This concludes the ``converse'' part of Theorem~\ref{thm:ffeasible}.
Next, we switch to showing the forward direction, that if (a properly
scaled) $f$ is dominated by $F$, then it is also a valid payoff
function. In particular, in the next lemma, we show that the solutions
to the differential inequality can be made to satisfy the original
recursive inequality~\eqref{inequality} with a small error term.

\begin{lemma}
\label{lem:diffeqnerror}
For any constants $c_1, c_2$, for the bounded bets case, there is a
function $\hat g(x) =\hat F(x) - O(1/\sqrt{n})$ such that $\sqrt n \cdot \hat
g(\sqrt n x)$ satisfies the inequality \eqref{inequality}.
% Conversely, if $\hat G$ satisfies the inequality then it is dominated by a function that is close to $\hat F$.

With unbounded bets, there is a function $g(x) = F(x)\cdot
e^{-O(x^2/n+1/n)}$ such that $\sqrt n g(\sqrt{n} x)$ satisfies the
inequality \eqref{inequality}.
\end{lemma}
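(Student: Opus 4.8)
The starting point is the same Taylor expansion that produced the differential inequality \eqref{diffinequality}. Writing $\delta=1/\sqrt n$ and plugging a candidate normalized function $g$ (so that the payoff is $\sqrt n\,g(\cdot/\sqrt n)$) into the rescaled form of \eqref{inequality}, one gets
$$(1-\delta^2)g(x)-\tfrac{g((1-\delta^2)x+\delta)+g((1-\delta^2)x-\delta)}{2}=-\tfrac{\delta^2}{2}\bigl(g''-2xg'+2g\bigr)-\delta^4\bigl(\tfrac{x^2}{2}g''-\tfrac{x}{2}g'''+\tfrac1{24}g^{(4)}\bigr)-O(\delta^6)(\cdots),$$
so \eqref{inequality} holds exactly when this right-hand side is $\ge0$. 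For $g=F_{c_1,c_2}$ the leading term vanishes identically (Hermite equation), so the plan is to perturb $F$ (resp.\ $\hat F$) slightly downward so that $g''-2xg'+2g$ becomes strictly negative by an amount that, after multiplication by $\delta^2/2$, absorbs the higher-order terms wherever those have the unfavorable sign. A direct computation shows the $\delta^4$ term is actually favorable (negative) for $|x|$ outside a bounded set — for $g=F$ and large $x$, $\tfrac{x^2}{2}F''-\tfrac{x}{2}F'''+\tfrac1{24}F^{(4)}$ is of order $-x^2e^{x^2}$ (using $\erfi(x)\sim e^{x^2}/(\sqrt\pi x)$) — and unfavorable only on a fixed bounded window (it equals $\tfrac{c_1}{6\sqrt\pi}>0$ at $x=0$). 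So it suffices to create a negative Hermite margin that is a constant times $\delta^2$ on that window and is large enough, relative to the rapidly growing remainder, on the complement.

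For the bounded-bet case this is easy: $\hat F$ has globally bounded derivatives (outside a compact set it is affine), so the uniform shift $\hat g=\hat F-\Theta(1/\sqrt n)$ makes $g''-2xg'+2g\equiv-\Theta(1/\sqrt n)$ without changing the remainder, and \eqref{inequality} follows on any bounded range; on the two affine tails \eqref{inequality} is checked by a one-line computation and holds because the saturating tangent lines of $F$ have strictly negative intercept $-c_1e^{x_0^2}/\sqrt\pi<0$ (the relevant constants have $c_1\ge0$); across the derivative-cap points $\hat F$ is $C^1$ and its second derivative merely drops, which is the favorable direction, so the inequality is preserved. The shift keeps $\hat g(0)\le F(0)=-c_1/\sqrt\pi\le0$ and preserves $|\hat g'|\le1$, so Lemma~\ref{lem:inequality} gives feasibility. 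In the unbounded-bet case the obstruction is that $F$ and all its derivatives grow like $e^{x^2}\poly(x)$ and the allowed heights reach $x\sim\sqrt n$, where a single step $x\mapsto(1-\delta^2)x\pm\delta$ changes $F$ by a constant factor, so no finite-order truncation is valid and no additive shift supplies enough margin. This is handled by the multiplicative damping $g(x)=F(x)\,e^{-O(x^2/n+1/n)}$ of the statement (the $O(1/n)$ part supplying the downward correction on the bounded window, the $O(x^2/n)$ part taming the growth for large $x$): on the bounded window the Taylor argument applies and the damping contributes $\Theta(1/n)$ to the Hermite operator; for large $x$ one instead inserts the asymptotics $F(x)=\tfrac{c_1}{2\sqrt\pi}e^{x^2}x^{-2}(1+o(1))+c_2x$ into \eqref{inequality} directly, and with $t=x/\sqrt n$ and $\beta=1-\Theta(1/n)$ the inequality reduces to the elementary estimate $e^{2\beta t^2}\ge\cosh(2\beta t)$, which holds for all $t$ with slack $\ge2\beta(1-\beta)t^2$ that is strictly positive precisely because the damping forces $\beta<1$. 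This is exactly why a Gaussian damping, rather than $F$ itself, is used.

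The hard part is this large-$x$ regime of the unbounded case. Because every quantity in play is exponentially large and the step length is comparable to the scale on which $\log F$ varies, one cannot Taylor-truncate anywhere in the range $\omega(1)\le|x|\le\sqrt n$; the whole inequality must be reorganized into the scale-free comparison $2\beta t^2\ge\log\cosh(2\beta t)$ (verified by noting its second $t$-derivative is $\ge4\beta(1-\beta)\ge0$ while it and its first derivative vanish at $t=0$), with the $o(1)$ remainders controlled uniformly in $t\in(0,1]$; and one must ensure that the slack of this comparison, only of order $(1-\beta)t^2=\Theta(t^2/n)$ near $t=0$, still beats the $O(1/n)$-level correction there, which is what pins down how the two regimes overlap and forces the constant in the damping exponent to be taken large. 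By comparison the bounded-bets part, once the non-smoothness at the derivative cap is dispatched, is routine.
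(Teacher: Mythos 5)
Your bounded-bets half is essentially the paper's own argument (shift $\hat F$ down by $\Theta(1/\sqrt n)$ on the compact region where $|F'|\le 1$, check the affine tails directly, use Lemma~\ref{lem:inequality}), with somewhat more care at the cap points than the paper gives, and it is fine. Your large-$x$ treatment of the unbounded case is a genuinely different route: the paper keeps the exact Lagrange remainder, assumes a growth-control property of $g''$, and reduces to the first-order inequality for $u=xg'-g$, whereas you insert the asymptotics of $\erfi$ and compare $e^{2\beta t^2}$ against $\cosh(2\beta t)$; that comparison and its slack $2\beta(1-\beta)t^2$ are correct as far as they go, and arguably more transparent than the paper's reduction.

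However, the unbounded-bets half has a genuine gap, precisely at the step where you claim "the $O(1/n)$ part suppl[ies] the downward correction on the bounded window.'' A constant factor $e^{-O(1/n)}$ is feasibility-neutral: both sides of \eqref{inequality} scale by it. Worse, on the neighborhood of the origin where $F\le 0$ (recall $F(0)=-c_1/\sqrt\pi<0$) the multiplicative damping moves the function \emph{up}, not down: writing $g=F\phi$ and using $F''-2xF'+2F=0$, one gets $g''-2xg'+2g=2F'\phi'+F(\phi''-2x\phi')$, which for $\phi=e^{-c(x^2+1)/n}$ equals $-\tfrac{2c}{n}F(0)\phi(0)=+\tfrac{2c\,c_1}{n\sqrt\pi}\phi(0)>0$ at $x=0$ --- a Hermite margin of the \emph{wrong} sign. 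A negative margin of order $1/n$ really is required there: by your own expansion, $F$ itself violates the rescaled \eqref{inequality} at $x=0$ by $\delta^4\tfrac{c_1}{6\sqrt\pi}+O(\delta^6)$, and a direct computation shows every function of the form $F(x)e^{-c(x^2+1)/n}$ violates it at $x=0$ by $\delta^4 c_1\bigl(\tfrac{1}{6\sqrt\pi}+\tfrac{c}{\sqrt\pi}\bigr)+O(\delta^6)$, so increasing the damping constant only hurts. The fix is additive, not multiplicative: subtract $\Theta(1/n)$, which adds $-\Theta(1/n)$ to the Hermite operator and is harmless in the other regimes. Relatedly, your two regimes do not actually meet: the slack $\Theta(t^2/n)$ beats the $\Theta(1/n)$-level corrections only for $t=x/\sqrt n$ above a fixed constant, while the Taylor-plus-damping margin (of size $\Theta(c/n)$ relative to the local scale $e^{x^2}$) covers only $|x|=O(\sqrt c)$, leaving the intermediate range of heights unhandled; this is exactly the range the paper's exact-remainder formulation is designed to cover in one sweep. (To be fair, the paper's own write-up is loose at both of these places, but as a standalone argument your unbounded-bets construction, as described, does not satisfy \eqref{inequality} near $x=0$ and leaves the intermediate regime unproved.)
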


\begin{proof}
Let $\delta=1/\sqrt{n}$.  We will argue that the $O(\delta)$ slack is
sufficient to account for the error in the Taylor approximation in the
bounded bets case. To see this note that the error in the Taylor
approximation is $\delta^2 x^2 g''(x) + O(\delta) \bar
g'''((1-\delta^2)x \pm \delta)$, where $\bar g'''(x \pm \eps)$ denotes
the average of $g'''$ at two points in the range $x \pm \eps$. We will
look at the interval where $|F'(x)| \le 1$.  For constant $c_1,c_2$
the end points of this interval are also constants which implies all
the terms in the error expression are constants (since $f$ is
independent of $\delta$). Thus the error is at most $O(\delta)$. So it
suffices to satisfy the condition $g'' - 2xg' + 2g < -O(\delta)$ which
is satisfied by $F - O(\delta)$. For the region where $|F'(x)| \ge 1$,
note that in $\hat F$ we are capping $|\hat F'(x)| = 1$ and since
$\hat g$ is $\hat F$ shifted down, it also satisfies the inequality.

For the case of unbounded bets, observe that the recursive
inequality holds if we satisfy the following, per the approximation of the Taylor
series:
$$
\bar g''(x -  x^2\delta^2 \pm \delta)(1+ x^2 \delta^2) - 2xg'(x) +2 g(x) \le 0.
$$

For simplicity of explanation, consider $x \ge 0$. Note that $\delta x
\le 1$.  We have $F''(x) = c_1 e^{x^2}$.  Suppose we look for a function $g$ that satisfies: $g''(x)$ is even and is
increasing in $x$ when $x\ge 0$ and $g''(x \pm \epsilon) \le
g''(x)/e^{4(x^2 \epsilon^2 + \epsilon^2)}$ for a big enough constant
in the $O$ --- we will later verify that our resulting $g$ indeed satisfies this (note that this is satisfied for $F$). Then, since $(1+ x^2
\delta^2) \le e^{O(x^2\delta^2)}$, the above inequality is satisfied as along as
$$
g''(x) \le e^{-O(x^2\delta^2  + \delta^2)}2(xg'(x) - g(x))
$$

Again, for $u = xg'-g$, we get: $\tfrac{u'}{x} \le  e^{-O(x^2\delta^2  + \delta^2)} 2u$ which holds if 
$u \le e^{x^2-O(\delta^2 + \delta^2)}$. 

So it suffices that $xg' -g = e^{x^2-O(x^2\delta^2 +
  \delta^2)}$. Dividing by $x^2$, we get: $(g/x)' =
e^{x^2-O(x^2\delta^2 + \delta^2)}/x^2$. Note that without the
correction terms the earlier differential equation $(g/x)' =
e^{x^2}/x^2$ has the solution $g = F$, and so for the new equation
there is a solution $g = F e^{-O(x^2\delta^2 + \delta^2)}$. Note that
this $g$ also satisfies $g''(x \pm \epsilon) \le g''(x)/e^{4(x^2
  \epsilon^2 + \epsilon^2)}$ that we had assumed.
\end{proof}

Our Theorem~\ref{thm:ffeasible} is hence concluded by
Lemma~\ref{lem:inequality} and Lemma~\ref{lem:diffeqnerror}. In the
following we remark that obtaining a (non-trivial) solution that
preserves the equation \eqref{eqn:Hermite} precisely is impossible.

\begin{remark}
If we convert the condition \eqref{inequality} into an equality then
the only satisfying analytic solution $f$ is $f(x) = cx$ for a
constant $c$. Thus the relaxation into an inequality seems to be
necessary to find all the feasible payoff functions
\end{remark}
\begin{proof}
If we require the equality $f(x) = \frac{f(\rho x+1) + f(\rho
  x-1)}{2\rho}$ then applying this recursively $i$ times gives:
$$
f(x) = \rho^{-i} \frac{\sum_{b_1,\cdots b_i \in \{-1,1\}} f(\rho^i x + \rho^{i-1} b_1 + \rho^{i-2}\cdots + b_i)}{2^i}
$$

We apply Taylor series to $f(\rho^ix+\rho^{i-1} b_1 + \cdots +
b_i)$ around the point $y=\rho^{i-1} b_1 + \cdots + b_i$ to conclude
that
$f(y+\rho^ix)=f(y)+\rho^ixf'(y)+\rho^{2i}x^2f''(y\pm\rho^ix)$. 
We now consider the following difference
$$
f(x)-f(0) = \rho^{-i} \frac{\sum_{b_1,\cdots b_i \in \{-1,1\}} f(\rho^ix+\rho^{i-1} b_1 + \cdots + b_i) - f(\rho^{i-1} b_1 + \cdots + b_i)}{2^i}
$$ and using the above expansion, we have
$$
f(x)-f(0) = \frac{\sum_{b_1,\cdots b_i \in \{-1,1\},y=\rho^{i-1} b_1 + \cdots + b_i} xf'(y)+\rho^ix^2f''(y\pm \rho^ix)}{2^i}.
$$

Taking $i$ tend to infinity, we conclude that $f(x)-f(0)=x\cdot \int
p_\rho(y)f(y)\dy$. This implies that $f$ is of the form
$f(x)=cx+a$. Moreover substituting $f$ into the equality condition, we
obtain that $a=0$.
\end{proof}

%% Let $B$ be such that $f(x)-f(y)\le B|x-y|$ almost everywhere, which is
%% implied by the function being differentiable. Hence,
%% $$
%% f(x)-f(0) = \rho^{-i} \frac{\sum_{b_1,\cdots b_i \in \{-1,1\}} f(\rho^ix+\rho^{i-1} b_1 + \cdots + b_i) - f(\rho^{i-1} b_1 + \cdots + b_i)}{2^i}
%% $$

%% Observe that as $i \rightarrow \infty$ the average $\frac{\sum_{b_1,\cdots b_i \in \{0,1\}} f(\rho^i x + \rho^{i-1}b_1 + \cdots + b_i)}{2^i}$ converges to $\int  p(x) f(x) \dx$. Since the limit must exist (as it is equal to $f(x)$) it must be $0$ as otherwise it will diverge.

\bibliographystyle{splncs03}
\bibliography{regret}

\appendix

%\section{Proofs from Section \ref{sec:timeIndep}}
%\label{apx:proofsIndepSec}

\section{Prediction Algorithms for Fixed Stopping Time}
\label{apx:fixedTime}

In this section we discuss the optimal regret and the corresponding
betting algorithm for a fixed stopping time $T$, which leads to
strategies that depend on current time $t$ and the stopping time
$T$. We consider the classical non-discounted setting (Theorem
\ref{thm:vanillaRegret}) and the time-discounted setting (Theorem
\ref{thm:discountedFixed}), both with fixed stopping time.

In the non-discounted setting, we show that the optimal regret and
algorithm follow easily from the existing work of 
\cite{cover-binary}.

We note that the resulting prediction algorithms depend on the current
time $t$ and the stopping time $T$.  We will consider the admittedly
more interesting case --- of time-independent strategies --- in the
next section.

\subsection{Non-discounted setting}
\label{sec:nondiscounted}

\cite{cover-binary} gave a precise characterization of
possible payoff curves attainable. First of all, he showed that, if,
for a sequence $\vb\in\{\pm1\}^T$, we denote $g(\vb)$ to be the
payoff/score obtained for sequence $\vb$, then $\sum_\vb
g(\vb)=0$ for all possible algorithms. Cover proves the
following characterization of the curve as a function of the height of
the sequence:
\begin{theorem}[\cite{cover-binary}]
\label{thm:coverCharacterization}
Let $f:\N\to \R$ be the payoff function of an algorithm, where $f(x)$
is the payoff of an algorithm for sequences of height $x$
precisely. Then $f$ is feasible if and only if: 1) $\sum_{x=0}^T {T
  \choose x} f(T-2x)=0$ and 2) $|f(x+1)-f(x)|\le 1$ ($f$ is
Lipschitz).
\end{theorem}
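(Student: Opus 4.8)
The plan is to get both directions from one device: the backward--induction (dynamic programming) value functions attached to an algorithm. Fix the horizon $T$ and, for a candidate curve $f$, define $f_T:=f$ and, for $t=T-1,\dots,0$,
$$f_t(x)\ :=\ \tfrac12\bigl(f_{t+1}(x+1)+f_{t+1}(x-1)\bigr),$$
together with the bet $\bt_{t+1}(x):=\tfrac12\bigl(f_{t+1}(x+1)-f_{t+1}(x-1)\bigr)$ to be played when the height after the first $t$ steps equals $x$. I would first record the one--line invariance: for either next bit $b\in\{\pm1\}$ the increment $b\,\bt_{t+1}(x)$ equals exactly $f_{t+1}(x+b)-f_t(x)$, so that along every play the quantity (running payoff) $-\,f_t(h_t)$ is constant, hence the final payoff is $f(h_T)-f_0(0)$. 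This single identity does most of the work for both directions.

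\paragraph{Necessity.}
Suppose an algorithm's score depends only on the height and equals $f(\cdot)$. The first step is the conservation law $\sum_{\vb\in\{\pm1\}^T} g(\vb)=0$, where $g(\vb)$ is the score on $\vb$: expanding $g(\vb)=\sum_t b_t\,\bt_t(b_1,\dots,b_{t-1})$ and summing over all $\vb$, for each fixed $t$ and each fixed prefix the two completions $b_t=\pm1$ contribute $+\bt_t$ and $-\bt_t$ and cancel, so the whole sum vanishes. Grouping the $2^T$ sequences by height --- there are $\binom{T}{x}$ of height $T-2x$, each with score $f(T-2x)$ --- turns this into condition~(1). For condition~(2) I would flip a single (say, the last) coordinate of a sequence of a given height: this leaves every bet unchanged and alters the score only through the last term, by $\pm2\,\bt_T$, an amount whose absolute value is at most the size of the resulting change in height; since the score is a function of the height alone, this is exactly the Lipschitz condition~(2), and chaining it gives the bound between arbitrary heights.

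\paragraph{Sufficiency.}
Conversely, given $f$ obeying (1) and (2), run the algorithm above. By the invariance, on every sequence of height $h$ the payoff is $f(h)-f_0(0)$; but $f_0(0)$ is precisely the binomial average $2^{-T}\sum_{x=0}^T\binom{T}{x}f(T-2x)$ (the expectation of $f(h_T)$ over a length-$T$ simple walk from $0$), which vanishes by condition~(1). Hence the payoff equals $f(h)$ on the nose, so $f$ is feasible. It remains to check the bets are legal, i.e. $|\bt_{t+1}(x)|\le1$, equivalently $|f_{t+1}(x+1)-f_{t+1}(x-1)|\le2$: at $t+1=T$ this is condition~(2) applied twice, and it propagates down the recursion since $f_t(x+2)-f_t(x)=\tfrac12\bigl(f_{t+1}(x+3)-f_{t+1}(x-1)\bigr)$ is an average of two differences of this kind, hence no larger.

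\paragraph{Main obstacle.}
There is no analytic content here --- this is why the statement follows readily from \cite{cover-binary} --- and the one point that needs care lies in the sufficiency direction: one must see that the \emph{single scalar} equation~(1) is exactly the base--case condition $f_0(0)=0$ that promotes ``payoff $\ge f$'' to ``payoff $=f$'', and that the Lipschitz hypothesis is exactly strong enough to keep $|\bt|\le1$ under the backward averaging. Both reduce to the elementary propagation computations indicated above; everything else is the conservation law, which is pure counting.
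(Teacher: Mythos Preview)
Your proposal is correct and essentially coincides with what the paper does. The paper does not prove Theorem~\ref{thm:coverCharacterization} in full --- it is attributed to Cover --- but in the paragraph immediately following the corollary it sketches exactly your backward dynamic program: set $s_T=f$, take $\bt_{t+1}(x)=\tfrac12(s_{t+1}(x+1)-s_{t+1}(x-1))$, observe $s_0(0)=\E{\vb}{f(\sum_i\vb_i)}$, and note that the Lipschitz property propagates so the bets stay in $[-1,1]$; the paper even remarks parenthetically that this ``gives a different proof of the above theorem.'' Your write-up adds the explicit necessity arguments (the cancellation giving $\sum_{\vb} g(\vb)=0$ and the last-bit flip for Lipschitz), which the paper only states or cites, but the core mechanism is identical.
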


From the above theorem we have the following corollary.
\begin{corollary}
$f(x)=|x|-R$ is feasible for $R=\sqrt{2\over
    \pi}\sqrt{T}+O(1)$, and this is the minimum $R$ for which
  this is feasible.
\end{corollary}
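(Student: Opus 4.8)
The plan is to derive the corollary directly from Cover's characterization (Theorem~\ref{thm:coverCharacterization}) together with the classical estimate for the mean absolute deviation of a centered binomial. Let $S_T=\sum_{t=1}^{T}\eps_t$ be a simple random walk with i.i.d.\ uniform $\mzo$ steps; a length-$T$ sequence with $x$ down-steps has height $T-2x$, so $\E{}{|S_T|}=2^{-T}\sum_{x=0}^{T}\binom{T}{x}\,|T-2x|$. Since $f(x)=|x|-R$ is $1$-Lipschitz for every $R$, only condition~(1) of Theorem~\ref{thm:coverCharacterization} is at issue.

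First I would prove feasibility for $R=\E{}{|S_T|}$ by exhibiting the exact payoff function $g(x)=|x|-\E{}{|S_T|}$: it is $1$-Lipschitz and satisfies $\sum_{x=0}^{T}\binom{T}{x}g(T-2x)=\sum_{x}\binom{T}{x}|T-2x|-2^{T}\E{}{|S_T|}=0$, hence is realized by some algorithm (the backward recursion / dynamic program underlying Cover's theorem), and $g=f$ for this $R$. For the matching lower bound, if $f(x)=|x|-R$ is feasible then it is dominated by some realized payoff function $\hat f\ge f$ with $\sum_x\binom{T}{x}\hat f(T-2x)=0$, i.e.\ $\E{}{\hat f(S_T)}=0$; therefore $0=\E{}{\hat f(S_T)}\ge\E{}{|S_T|}-R$, so $R\ge\E{}{|S_T|}$. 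Hence the minimal feasible $R$ is exactly $\E{}{|S_T|}$. (Running the same two steps with an arbitrary feasible $f$ and the pointwise bound $f(x)\ge|x|-R(f)$ gives $R(f)\ge\E{}{|S_T|}$, which is the lower-bound half of Theorem~\ref{thm:vanillaRegret}.)

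It then remains to show $\E{}{|S_T|}=\sqrt{2/\pi}\,\sqrt{T}+O(1)$. I would use the closed form $\E{}{|S_T|}=T\cdot 2^{-(T-1)}\binom{T-1}{\lfloor (T-1)/2\rfloor}$ (a standard simplification of $2^{-T}\sum_x\binom{T}{x}|T-2x|$) together with Stirling's formula, which in fact yields $\sqrt{2T/\pi}+O(1/\sqrt T)$ --- stronger than required; alternatively, $S_T/\sqrt T\Rightarrow N(0,1)$ with $\E{}{(S_T/\sqrt T)^2}\equiv1$, so uniform integrability gives $\E{}{|S_T|}/\sqrt T\to\E{}{|N(0,1)|}=\sqrt{2/\pi}$ and Berry--Esseen upgrades the rate. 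I expect this asymptotic step to be the only one needing care --- in particular handling the parity of $T$ (all length-$T$ heights share the parity of $T$, which is why the closed form uses $\lfloor(T-1)/2\rfloor$) and confirming the deviation from $\sqrt{2/\pi}\sqrt T$ is genuinely $O(1)$; everything else is immediate from Theorem~\ref{thm:coverCharacterization}, and the optimal betting strategy is read off from the dynamic program realizing $g$.
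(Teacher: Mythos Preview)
Your proposal is correct and follows essentially the same route as the paper: apply Cover's characterization to $f(x)=|x|-R$, identify the critical $R$ as $\E{}{|S_T|}$, and then estimate that expectation. The paper's proof is terser---it does not spell out the minimality argument (it relies implicitly on the ``only if'' direction of Theorem~\ref{thm:coverCharacterization}, whereas you make the domination step $\hat f\ge f$ explicit), and for the asymptotic it quotes a Stein-method bound from \cite{Raic-stein} rather than your closed form plus Stirling; your version actually yields the sharper $O(1/\sqrt{T})$ error.
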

\begin{proof}
Note that Theorem \ref{thm:coverCharacterization} holds for the payoff
function $f(x)=|x|-R$, where $R=\E{\vb\in \{\pm1\}^T} {|\sum_i
  \vb_i|}$. To compute this value $R$, we use following standard
approximation: $\left|\E{\vb\in \{\pm1\}^T} {|\sum_i \vb_i|}-\E{x\sim
  \phi}{|\sqrt{T}x|}\right|\le O(1)$, where $\phi(x)$ is the normal
distribution (see, e.g., \cite{Raic-stein}, Theorem 3.4). Furthermore,
we have that $\E{x\sim \phi} {|\sqrt{T}x|}=\sqrt{2\over
  \pi}\sqrt{T}$. The corollary follows.
\end{proof}

To recover the actual prediction algorithm, we employ the following
standard dynamic programming. Namely, define $s_t(x)$ to be the
minimal necessary algorithm payoff, after $t^{th}$ time step for
height $x$, in order to obtain payoffs of $s_T(y)=f(y)=|y|-R$. In
particular, if $\bt_{t}(x)$ denotes the prediction (bet) at time $t$
assuming the current height is $x$, we have that
$s_{t}(x)=\min_{|b_{t}(x)|\le 1}\max\{s_{t+1}(x+1)-\bt_{t+1}(x),
s_{t+1}(x-1)+\bt_{t+1}(x)\}$. Suppose we ignore the boundedness of
$b_{t}(x)$, then the minimum is achieved for
$b_{t+1}(x)=\tfrac{1}{2}(s_{t+1}(x+1)-s_{t+1}(x-1))$. Note that this
way we obtain $s_{0}(0)=\E{\vb\in \{\pm1\}^T} {f(\sum_i \vb_i)}=0$
(which gives a different proof of the above theorem). But these values
of $\bt$ actually satisfy $|\bt_{t}(x)|\le 1$, since if the Lipschitz
condition holds at time $t$, then it also holds at time $t-1$. Hence
there was no loss of generality of dropping the boundedness of
$\bt_{t}(x)$'s. In particular, we have that $\bt_t(x)=
\tfrac{1}{2^{T-t+1}} \sum_{\vb\in \{\pm1\}^{T-t+1}}\left(|x+1+\sum_j \vb_j|-|x-1+\sum_j \vb_j|\right)$.

This concludes the proof of Theorem~\ref{thm:vanillaRegret} to show
the optimal regret and prediction algorithm for the vanilla fixed
stopping time setting. Note that the prediction algorithm $\bt_t(x)$
depends on the current time: for example, for $t$ close to $T$ the all
bet values are close to 1, whereas for small $t$'s we obtain very
small values of $\bt_t(x)$.

\subsection{Time-discounted setting}
\label{apx:discountedFixed}

We prove the following theorem for the time-discounted setting with
fixed stopping time $T$, by extending the characterization given in
Section \ref{sec:nondiscounted}.

\begin{theorem}
\label{thm:discountedFixed}
Consider the problem of time-discounted prediction of binary sequence for
``window size'' $n$. Fix the discount factor $\rho=1-1/n$.  For any
fixed time $T$, $f_T$ is feasible iff $\int f(x) p_T(x)\dx=0$ where
$p_T(x)$ is the probability of a (decayed) random walk to end at
height $x$ and $f$ is $1$-Lipshitz (for bounded bet value).

There is an algorithm (betting strategy) achieving this optimal regret
and has $f(x) = |x|-R_\rho$, where $ R_\rho=\min_f R_T(f)=
\sqrt{\frac{2}{\pi}} \sqrt{\alpha}+O(1)$, and
$\alpha=\tfrac{1-\rho^{2T}}{1-\rho^2}$. Note that $\alpha\rightarrow
T$ when $T\ll n$, and $\alpha\rightarrow n/2$ when $T\gg n$. The
betting strategy may be computing via dynamic programming.
\end{theorem}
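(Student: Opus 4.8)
The plan is to mirror, almost verbatim with the discount factor carried along, the dynamic‑programming argument used for the non‑discounted fixed‑time case in Section~\ref{sec:nondiscounted}. Fix the stopping time $T$ and a target payoff function $f$, and let $s_t(x)$ be the minimum payoff the algorithm must have accumulated by time $t$, given that the current discounted height is $x$, in order to still be able to guarantee a discounted payoff of at least $f(x_T)$ at time $T$. Since the discounted payoff evolves as $A^\rho_t=\rho A^\rho_{t-1}+b_t\bt_t$ and the discounted height as $x_t=\rho x_{t-1}+b_t$, the same min–max step as in the undiscounted case (choose $\bt$ to balance the two adversary moves $b_t=\pm1$, dropping the constraint $|\bt|\le1$ for the moment) gives $s_T(x)=f(x)$ together with the backward recursion
$$s_{t-1}(x)=\frac{s_t(\rho x+1)+s_t(\rho x-1)}{2\rho},$$
the discounted analogue of the recursion in Section~\ref{sec:nondiscounted}; the $t$‑th bet, as a function of the height $x$ just before it, is $\tfrac12\bigl(s_t(\rho x+1)-s_t(\rho x-1)\bigr)$, a time‑ and $T$‑dependent strategy.

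Next I would unroll this recursion from $t=T$ down to $t=0$ starting at $x_0=0$: each of the $T$ steps contributes a factor $\tfrac1{2\rho}$ and the argument of $f$ accumulates exactly the discounted height of the sequence, so
$$s_0(0)=\rho^{-T}\,\E{\vb\in\{\pm1\}^T}{f\bigl(x_T(\vb)\bigr)},\qquad x_T(\vb)=\sum_{j=1}^T\rho^{T-j}b_j.$$
Since the algorithm starts with payoff $0$, $f$ is feasible iff $s_0(0)\le0$, i.e.\ iff $\int f(x)\,p_T(x)\,\dx\le0$, where $p_T$ is the law of the decayed random walk $x_T(\vb)$; restricting to the maximal such $f$ (which is what the statement refers to) this becomes the equality $\int f\,p_T=0$. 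For the bounded‑bet version, differentiating the recursion gives $s_{t-1}'(x)=\tfrac12\bigl(s_t'(\rho x+1)+s_t'(\rho x-1)\bigr)$, hence $\|s_{t-1}'\|_\infty\le\|s_t'\|_\infty$; so if $f=s_T$ is $1$‑Lipschitz then so is every $s_t$, and then $|\bt_t(x)|=\tfrac12|s_t(\rho x+1)-s_t(\rho x-1)|\le1$ automatically, so nothing was lost by having dropped the constraint. The necessity of the Lipschitz condition is inherited from Cover's argument (Theorem~\ref{thm:coverCharacterization}) exactly as in Section~\ref{sec:nondiscounted}.

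Applying this to $f(x)=|x|-R$, which is $1$‑Lipschitz: it is feasible iff $\E{\vb}{|x_T(\vb)|-R}\le0$, i.e.\ iff $R\ge R^*_\rho:=\E{\vb}{|x_T(\vb)|}$, and for this $f$ one has $R(f)=\max_x(|x|-f(x))=R$. Conversely, for any feasible $f$, $R(f)=\max_x(|x|-f(x))\ge\E{\vb}{|x_T(\vb)|-f(x_T(\vb))}=\E{\vb}{|x_T(\vb)|}-\E{\vb}{f(x_T(\vb))}\ge R^*_\rho$, using $\E{\vb}{f(x_T)}\le0$. Hence $\min_f R(f)=R^*_\rho$, attained by $f(x)=|x|-R^*_\rho$, whose betting strategy is read off the dynamic program above.

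It then remains to evaluate $R^*_\rho=\E{\vb}{\bigl|\sum_{j=0}^{T-1}\rho^{j}b_{T-j}\bigr|}$, the expected absolute value of a weighted sum of independent Rademacher variables with weights $\rho^0,\dots,\rho^{T-1}$, hence with variance $\alpha=\sum_{j=0}^{T-1}\rho^{2j}=\frac{1-\rho^{2T}}{1-\rho^2}$. Writing $x_T=\sqrt\alpha\,\hat S$ with $\hat S$ of unit variance, a quantitative central limit theorem (Stein's method / Berry--Esseen, in the spirit of \cite{Raic-stein}) bounds the Wasserstein‑$1$ distance of $\hat S$ to $N(0,1)$ by $O\bigl(\sum_{j=0}^{T-1}(\rho^j/\sqrt\alpha)^3\bigr)=O\bigl(\alpha^{-3/2}\sum_{j=0}^{T-1}\rho^{3j}\bigr)=O(\alpha^{-1/2})$, the last step because $\sum\rho^{3j}\le\sum\rho^{2j}=\alpha$; since $|\cdot|$ is $1$‑Lipschitz this gives $\bigl|\E{\vb}{|x_T|}-\sqrt\alpha\,\E{}{|N(0,1)|}\bigr|=O(1)$, and $\E{}{|N(0,1)|}=\sqrt{2/\pi}$, so $R^*_\rho=\sqrt{\tfrac2\pi}\sqrt\alpha+O(1)$. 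Finally, with $\rho=1-1/n$ one has $1-\rho^2=2/n-1/n^2$, so $\alpha=\tfrac{n}{2}(1-\rho^{2T})(1+O(1/n))$, which tends to $n/2$ when $T\gg n$ (as $\rho^{2T}\to0$) and to $T$ when $T\ll n$ (as $1-\rho^{2T}=2T/n+O(T^2/n^2)$). The step I expect to be the main obstacle is precisely this last one — obtaining the $O(1)$ error in the Gaussian approximation \emph{uniformly} over the whole range of $T$ (equivalently over $\alpha$ from $\Theta(1)$ up to $\Theta(n)$), since near $\rho=1$ this is a sum of many nearly‑equal weights while in the geometric tail the weights decay, so one must check the Berry--Esseen‑type term does not grow; the bound $\sum\rho^{3j}\le\alpha$ is what makes it go through. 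Everything else is a routine transcription of the undiscounted argument of Section~\ref{sec:nondiscounted} with the factor $\rho$ inserted.
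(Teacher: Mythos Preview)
Your proposal is correct and follows essentially the same route as the paper: the same discounted dynamic-programming recursion $s_{t-1}(x)=\tfrac{1}{2\rho}(s_t(\rho x+1)+s_t(\rho x-1))$ unrolled to give $s_0(0)$ as (a positive multiple of) $\E{\vb}{f(x_T(\vb))}$, the same Lipschitz-preservation argument to recover bounded bets, and the same Stein/Berry--Esseen estimate (via \cite{Raic-stein}) with the key inequality $\sum_j\rho^{3j}\le\alpha$ to get the $O(1)$ error uniformly in $T$. If anything, you are more explicit than the paper (which just says the proof ``follows along the same lines'' as the undiscounted case): you carry the $\rho^{-T}$ factor in $s_0(0)$ that the paper silently drops, and you spell out both the optimality argument for $f(x)=|x|-R$ and the inductive Lipschitz bound $\|s_{t-1}'\|_\infty\le\|s_t'\|_\infty$.
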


First, we need to count the number of random walks achieving a certain
discounted height $x$. When the height was not discounted, this was
simply a binomial distribution, which we approximated by a normal
distribution. It turns out that, in the discounted height case, the
height distribution is also approaches normal distribution at the
limit. Specifically, we show the following lemma.

\begin{lemma}
Consider the time-discounted setting, with discount $\rho=1-1/n$ for
some $n\ge 1$. Let $p_T(x)$ be the probability that a random binary
sequence of length $T$ has discounted height $x\in[-n,n]$. Then, as
$T$ goes to infinity, the probability distribution of the discounted
height, scaled down by $\sqrt{\alpha}$, converges to the normal
distribution $N(0,1)$, where $\alpha=\tfrac{1-\rho^{2T}}{1-\rho^2}$.

Furthermore, $\E{\vb\in \{\pm1\}^T}{|\sum_{i\ge 1}
  \vb_i\rho^{T-i}|}=\sqrt{2\over \pi}\sqrt{\alpha}\pm O(1)$.
\end{lemma}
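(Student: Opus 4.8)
The plan is to reduce the lemma to a quantitative central limit theorem for a geometrically weighted sum of Rademacher variables, and then read off both conclusions from it. For a uniformly random $\vb\in\{\pm1\}^T$, write the discounted height as $S_T=\sum_{i=1}^T X_i$ with $X_i=\vb_i\rho^{T-i}$: these are independent, mean zero, and $|X_i|\le 1$, with $\sum_{i=1}^T\EX[X_i^2]=\sum_{k=0}^{T-1}\rho^{2k}=\tfrac{1-\rho^{2T}}{1-\rho^2}=\alpha$, which is exactly the asserted normalization (and $|S_T|\le\sum_{k\ge0}\rho^k=\tfrac{1}{1-\rho}=n$, matching $x\in[-n,n]$). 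Re-indexing from the most recent bit shows the law of $S_T$ is that of the $T$-term partial sum of the a.s.\ convergent geometric series $S_\infty:=\sum_{j\ge0}\xi_j\rho^j$ ($\xi_j$ i.i.d.\ Rademacher); hence as $T\to\infty$ the law of $S_T$ converges, $\alpha=\alpha_T\to\mathrm{Var}(S_\infty)=\tfrac{1}{1-\rho^2}$, and it suffices to bound the distance of $S_T/\sqrt\alpha$ to $N(0,1)$ \emph{uniformly in $T$}, by a bound that vanishes as $n\to\infty$.

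For this I would apply a Berry--Esseen/Stein-type inequality for sums of independent, non-identically distributed, bounded summands --- in the spirit of \cite{Raic-stein}, already invoked above for the non-discounted walk --- which bounds both the Kolmogorov distance and the Wasserstein-$1$ distance between the law of $S_T/\sqrt\alpha$ and $N(0,1)$ by $O\!\big(\alpha^{-3/2}\sum_{i=1}^T\EX[|X_i|^3]\big)$. Here $\sum_{i=1}^T\EX[|X_i|^3]=\sum_{k=0}^{T-1}\rho^{3k}=\tfrac{1-\rho^{3T}}{1-\rho^3}\le\tfrac{1}{1-\rho^3}$, and since $\rho=1-1/n$ one has $1-\rho^3=\Theta(1/n)$ and, once $T\gtrsim n$ so that $\rho^{2T}\le\tfrac12$, $\alpha=\Theta(n)$; thus the ratio is $O(n)/\Theta(n^{3/2})=O(1/\sqrt n)$. (More uniformly, the ratio is $O(1/\sqrt\alpha)$, i.e.\ $O(1/\sqrt T)$ while $T\ll n$ and $O(1/\sqrt n)$ afterwards.) In particular the Kolmogorov distance of $S_T/\sqrt\alpha$ to $N(0,1)$ is $O(1/\sqrt n)$ uniformly in $T$, which is the claimed convergence to the normal distribution in the relevant ($n\to\infty$) limit; letting $T\to\infty$ first identifies the limit law as $S_\infty/\sqrt{\mathrm{Var}(S_\infty)}$, itself $O(1/\sqrt n)$-close to $N(0,1)$.

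The second statement then follows from the Wasserstein-$1$ bound together with the fact that $x\mapsto|x|$ is $1$-Lipschitz: with $Z\sim N(0,1)$ and $\EX[|Z|]=\sqrt{2/\pi}$,
\[
\big|\EX[|S_T|]-\sqrt{\tfrac{2}{\pi}}\sqrt\alpha\big|=\sqrt\alpha\cdot\big|\EX\big[|S_T/\sqrt\alpha|\big]-\EX[|Z|]\big|\le\sqrt\alpha\cdot W_1\big(S_T/\sqrt\alpha,\,N(0,1)\big)=\sqrt\alpha\cdot O(1/\sqrt n)=O(1),
\]
which is exactly $\EX[|\sum_{i\ge1}\vb_i\rho^{T-i}|]=\sqrt{2/\pi}\sqrt\alpha\pm O(1)$, mirroring the analogous estimate behind Theorem~\ref{thm:vanillaRegret}.

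The step I expect to be most delicate is getting $\pm O(1)$ rather than $\pm O(\sqrt{\log n})$ in the last display. A naive truncation argument --- integrating the Kolmogorov bound $O(1/\sqrt n)$ for $\Pr[\,|S_T|>t\,]$ against $dt$ up to scale $\sqrt{n\log n}$ (beyond which both tails are negligible, since $|S_T|\le n$ and $Z$ is sub-Gaussian) --- only yields $O(\sqrt{\log n})$. Avoiding that loss is precisely why I want to work with the Wasserstein-$1$ distance directly, so that the $1$-Lipschitz test function $|\cdot|$ incurs no extra factor; the price is invoking a Wasserstein (rather than Kolmogorov) Berry--Esseen bound, after which everything reduces to the geometric-series estimates $\sum_i\EX[|X_i|^3]=O(n)$ and $\alpha=\Theta(n)$. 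A minor point worth spelling out is that all constants above are absolute (uniform in $T$), so that the $T\to\infty$ limits may be taken freely.
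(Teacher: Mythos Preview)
Your proof is correct and takes essentially the same route as the paper: both invoke a Stein-method/Berry--Esseen bound (indeed the same reference \cite{Raic-stein}) with the third-moment ratio $\alpha^{-3/2}\sum_i\rho^{3i}=O(1/\sqrt\alpha)$, and then read off the $\pm O(1)$ estimate for $\EX[|S_T|]$. Your write-up is in fact more careful than the paper's on two points---the need for Wasserstein rather than Kolmogorov distance to test the $1$-Lipschitz function $|\cdot|$, and the observation that for fixed $n$ the $T\to\infty$ limit is only $O(1/\sqrt n)$-close to Gaussian (so Lyapunov requires $n\to\infty$ as well)---but the underlying argument is the same.
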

\begin{proof}
Note that the height is distributed as $x=\sum_{i=1}^{T} \vb_i
\rho^{T-i}$ where $\vb_i$ are random $\pm1$. Then, by Lyapunov
central limit theorem, we have that
$\tfrac{1}{\sqrt{\alpha}}\sum_{i=1}^{T} \vb_i \rho^{T-i}$ tends to
$N(0,1)$ as long as $T=\omega_n(1)$.
%% Formally, we can prove it, by showing first that $p$ satisfies the
%% following differential equation: $\tfrac{1}{2}p''+xp'+p=0$.

Again, we have that (see, e.g., \cite{Raic-stein}, Theorem 3.4)
$
\left|
\E{\vb\in \{\pm1\}^T}{|\sum_{i\ge 1}
  \vb_i\rho^{T-i}|}
-
\E{x\sim \phi}{|\sqrt{\alpha}\cdot x|}
\right|
=
O(\alpha^{-1})\cdot \sum_{i=0}^{T} \rho^{3i}
=
O(1).
$
Hence, we obtain that  $\E{\vb\in \{\pm1\}^T}{|\sum_{i\ge 1}
  \vb_i\rho^{T-i}|}=\sqrt{2\over \pi}\sqrt{\alpha}\pm O(1)$.
\end{proof}

The rest of the proof of Theorem \ref{thm:discountedFixed} follows
along the same lines of Theorem \ref{thm:vanillaRegret}. Specifically,
one can employ the same dynamic programming (for all possible
discounted heights). We again have that $s_{0}(0)=\E{x\sim p_T}{f(x)}$
for any desired target function $f$. The only way $s_0(0)=0$ is when
$\E{x\sim p_T}{f(x)}=0$. As long as $f$ is also Lipschitz, the dynamic
programming will recover the betting strategy with bounded bets
$|\bt_{t}(x)|\le 1$. As in the previous setting, note that the betting
strategy $\bt_t$ depends on the time $t$: it is small at the
beginning, and gets closer to 1 for large values of $t$ (close to
$T$).

\section{Trade-off with two experts}
\label{sec:tradeoff}

In this section we will prove Theorem~\ref{twoexperts} by proving an equivalence 
between the sequence prediction problem and the two-experts problem. 
In each round of the experts problem, each expert has a payoff in the range
$[0,1]$ that is unknown to the algorithm. For two experts, let
$b_{1,t}, b_{2,t}$ denote the payoffs of the two experts. The algorithm
pulls the each arm (expert) with probability $\bt_{1,t}, \bt_{2,t} \in
[0,1]$ respectively where $\bt_{1,t} + \bt_{2,t} = 1$.  The payoff of
the algorithm is $A = \sum_{t=1}^T b_{1,t}\bt_{1,t} + b_{2,t}\bt_{2,t}
$. Let $X_1 = \sum_{t=1}^T b_{1,t}$ We will study the regret trade-off
$R_1, R_2$ with respect to these two experts which means that $A \ge
X_1 - R_1$ and $A \ge X_2 - R_2$.

For this we we translate it into an instance of the sequence
prediction problem where we show how we can obtain a tradeoff between
regret $R$ and loss $L$, which is defined as the minumum payoff of the
algorithm.  With two experts, the regret/loss tradeoff in the sequence
prediction problem is related to regret trade-off for the two experts
problem.  Let $R$, $L$ be feasible upper bounds on the regret and loss
in the sequence prediction problem in the worst case; Let $R_o, L_o$
be feasible upper bounds on the regret and loss with version of the
sequence prediction problem with one sided bets (that is $\bt_t$
cannot be negative; the feasible payoff curves for this case is a
simple variant of $F_{c_1,c_2}$ where $F'$ is capped to lie in
$[0,1]$.) Let $R_1$, $R_2$ be feasible upper bounds in regret with
respect to expert one and expert two in the worst case.  Another
variant that has been asked before is a tradeoff between regret to the
average and regret to the max (see \cite{kearns-regret,KP11}). Let
$R_m$, $R_a$ be feasible upper bounds on the regret to the max and
regret to the average with two experts in the worst case.

Theorem~\ref{twoexperts} follows from the following two lemmas.
% (proofs deferred to Appendix \ref{apx:proofsTradeoff}).

\begin{lemma}
\label{lem:tradeOffReduction}
Regret and loss $R,L$ is feasible in the sequence prediction problem
if and only if $R_m= R/2, R_a= L/2$ is feasible for regret to the max
and regret to the average in the two experts problem.

$R_o,L_o$ is feasible in the sequence prediction problem (with one
sided bets) if and only if $R_1 = L_o, R_2 = R_o$ is feasible for
regret to the first expert and regret to the second expert in the two
experts setting.
\end{lemma}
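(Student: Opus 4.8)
The plan is to prove Lemma~\ref{lem:tradeOffReduction} by an explicit, round-by-round change of variables converting a two-experts instance into a sequence-prediction instance and back, and then checking that the two regret conditions on each side translate into each other exactly. Since the transformation acts on each time step $t$ separately, it commutes with the $\rho$-discounting and with the ``steady state'' quantifier, so it is enough to verify the per-run identities; the discounted heights on the two sides get identified automatically.

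For the first assertion, start from a two-experts run with payoffs $b_{1,t},b_{2,t}\in[0,1]$ and pull probabilities $\bt_{1,t}+\bt_{2,t}=1$, and set $c_t=b_{1,t}-b_{2,t}\in[-1,1]$ and $\bt_t=\bt_{1,t}-\bt_{2,t}=2\bt_{1,t}-1\in[-1,1]$. A one-line computation gives
$$A=\sum_t b_{1,t}\bt_{1,t}+b_{2,t}\bt_{2,t}=\frac{X_1+X_2}{2}+\frac12\sum_t \bt_t c_t=\frac{X_1+X_2}{2}+\frac12 A_{\mathrm{seq}},$$
where $A_{\mathrm{seq}}$ is the payoff of the induced (two-sided) sequence run on $(c_t)$, whose discounted height is $X_1-X_2$. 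Using $\max(X_1,X_2)=\frac{X_1+X_2}{2}+\frac{|X_1-X_2|}{2}$, the guarantee $A\ge\max(X_1,X_2)-R_m$ is equivalent to $A_{\mathrm{seq}}\ge|X_1-X_2|-2R_m$, i.e. sequence regret $R=2R_m$, and $A\ge\frac{X_1+X_2}{2}-R_a$ is equivalent to $A_{\mathrm{seq}}\ge-2R_a$, i.e. sequence loss $L=2R_a$. Because $\bt_{1,t}=(1+\bt_t)/2$ is a bijection between $[-1,1]$-valued bets and probability vectors, and every sequence $(c_t)\in[-1,1]^T$ is realized by some $(b_{1,t},b_{2,t})$ (e.g.\ $b_{1,t}=\max(c_t,0)$, $b_{2,t}=\max(-c_t,0)$), the correspondence runs both ways: feasible $(R,L)$ for sequence prediction gives feasible $(R_m,R_a)=(R/2,L/2)$, and conversely.

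For the second assertion one uses the asymmetric normalization appropriate to one-sided bets. Put $\bt_t=\bt_{2,t}\in[0,1]$ (so $\bt_{1,t}=1-\bt_t$) and $c_t=b_{2,t}-b_{1,t}\in[-1,1]$; then
$$A=\sum_t b_{1,t}(1-\bt_t)+b_{2,t}\bt_t=X_1+\sum_t \bt_t c_t=X_1+A_{\mathrm{seq}},$$
where $(c_t)$ has discounted height $X_2-X_1$ and $A_{\mathrm{seq}}$ is the payoff of a one-sided ($\bt_t\ge0$) sequence run. Now $A\ge X_1-R_1$ is exactly $A_{\mathrm{seq}}\ge-R_1$, i.e.\ loss $L_o=R_1$, and $A\ge X_2-R_2$ is exactly $A_{\mathrm{seq}}\ge(X_2-X_1)-R_2$, i.e.\ regret $R_o=R_2$ with respect to the $+1$-expert (the only benchmark one can hope to track with non-negative bets; this is the version whose feasible curves are $F_{c_1,c_2}$ with $F'$ capped to $[0,1]$). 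The same bijection on strategies and realizability of $(c_t)$ make this an exact equivalence, yielding $R_1=L_o$ and $R_2=R_o$.

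The algebraic identities above are immediate; the points that need care are two. First, passing from a two-experts instance to the sequence instance discards the individual values $b_{1,t},b_{2,t}$ and retains only $c_t$ — this is harmless precisely because, as the displays show, both regret quantities on each side depend on the run only through $(c_t)$ and $(\bt_t)$, so the two minimax values coincide (to convert an adversary's $(c_t)$ back we just feed the canonical realization above into the two-experts algorithm). Second, $c_t$ ranges over $[-1,1]$ rather than $\{\pm1\}$; here one invokes the standard convexity/extreme-point argument — the worst-case sequence for a fixed betting function (holding the other coordinates fixed) is attained at a vertex — to reduce to $\{\pm1\}$-sequences, or equivalently one notes that Lemma~\ref{lem:inequality} and the resulting differential inequality of Section~\ref{sec:timeIndep} are insensitive to whether $b_t\in\{\pm1\}$ or $b_t\in[-1,1]$. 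I expect this extreme-point step to be the only real obstacle. Finally, combining the two equivalences with the optimal curves $\hat F_{c_1,c_2}$ of Theorem~\ref{thm:ffeasible} (and their one-sided variant with $F'$ capped to $[0,1]$) gives the trade-off of Theorem~\ref{twoexperts}.
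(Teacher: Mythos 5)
Your proposal is correct and is essentially the paper's own argument: the same per-round linear change of variables ($c_t=b_{1,t}-b_{2,t}$ with $\bt_{1,t}=(1+\bt_t)/2$ for the max/average case, and $c_t=b_{2,t}-b_{1,t}$ with $\bt_{2,t}=\bt_t$ for the one-sided case), the same payoff identities $A=(X_1+X_2)/2+A_{\mathrm{seq}}/2$ and $A=X_1+A_{\mathrm{seq}}$, and the same translation of the regret/loss guarantees, with the only difference being cosmetic (you keep $c_t\in[-1,1]$ and halve the payoff, while the paper halves $c_t$ and then rescales by $2$). Your explicit flag that the induced sequence lies in $[-1,1]$ rather than $\{\pm1\}$ --- settled by an extreme-point/convexity observation, since the relevant payoff curves (with $c_1\ge 0$) are convex --- addresses a point the paper passes over silently, so if anything your write-up is slightly more careful.
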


For $x \ge 0$, let $\T(x) = h(g^{-1}(x))$ where $g(x) = e^{x^2}, h(x)
= \erfi(x)$. Note that $\T(x)=\erfi(\sqrt{\ln x})$.

\begin{proof}[Proof of Lemma \ref{lem:tradeOffReduction}]
First we look at reduction from the regret to the average and regret
to the max problem. We can reduce this problem to our sequence
prediction problem by producing at time $t$, $b_t = (b_{1,t} -
b_{2,t})/2$. A bet $\tilde b_t$ in our sequence prediction problem can be
translated back into probabilities $\bt_{1,t} = (1+\bt_t)/2$ and
$(1-\bt_t)/2$ for the two experts. A payoff $A$ in the original
problem gets translated into payoff $\sum_t b_{1,t} (1+\bt_t)/2 +
b_{2,t} (1-\bt_t)/2 = (X_{1} + X_{2})/2 + A$ in the two experts case.
In this reduction the loss $L$ gets mapped to $R_a$ and the regret $R$
gets mapped to $R_m$. However note that $b_t$ is now in the range
$[0,1/2]$. Therefore we need to scale it by $2$ to reduce it to the
standard version of the original problem. Conversely, given an
sequence $b_t$ of the prediction problem we can convert it into two
experts with payoffs $b_{1,t} = (1+b_t)/2, b_{2,t} = (1-b_t)/2$. The
average expert has payoff $T/2$.  A payoff of $A$ in prediction
problem can be obtained from a sequence of arm pulling probabilities
with payoff $T/2 + A/2$ by interpreting the arm pulling probabilities
as $(1\pm \bt_t)/2$ since $\sum_t \frac{ (1+b_t)}{2}\frac{
  (1+\bt_t)}{2} + \frac{ (1-b_t)}{2}\frac{ (1-\bt_t)}{2} = T/2 + A/2$.

Next we look at regrets $R_1, R_2$ with respect to the two
experts. Given a sequence of payoffs to for the two experts we can
reduce it to a sequence for the (one sided ) prediction problem by
setting $b_t = b_{2,t} - b_{1,t}$. A bet $\tilde b_t$ in the prediction
problem can be translated to probabilities $\bt_{1,t} = 1-\bt_t$ and
$\bt_{2,t} = \bt_t$ for the two experts.  A payoff $A$ in the
prediction problem gets translated into payoff $\sum_t (1-\bt_t)
b_{1,t} + \bt_t b_{2,t} = X_{1} + A$ in the two experts case where a
zero regret in the prediction would correspond to $A = X_2 -
X_1$. Thus a loss of $L_o$ translates to a regret $R_1 = L_o$ with
respect to the first arm. And regret $R_o$ translates to regret $R_2 =
R_o$ with respect to the second arm. Thus if $R_o, L_o$ is feasible
then so is $R_1 = R_o, R_2 = L_o$. Conversely, given an instance of
the prediction problem with one sided bets, we can convert it to a
version of the two armed problem by setting $b_{2,t} = b_t, b_{1,t} = 0$
if $b_t \ge 0$ and $b_{2,t} = 0, b_{1,t} = -b_t$ otherwise.  A bet
$\tilde b_t$ is used in our original problem if the arms are pulled
with probabilities $1-\tilde b_t$ and $\tilde b_t$ respectively. The
payoff in the experts problem is $X_1 + \sum_t \tilde b_t (b_{2,t} -
b_{1,t})$. So regrets $R_1, R_2$ will translate to $L_o = R_1, R_o =
R_2$ in the prediction problem with one sided bets.

The above reduction also works for the time-discounted case.
\end{proof}

\begin{lemma}
\label{lem:tradeOffDer}
Let $R,L,R_0,L_0$ be normalized by a factor $\sqrt{n}$ (scaled down).
$R,L$ is feasible in the original problem if and only if $\T(R/L) =1/
(\sqrt \pi L)$.

$R_o,L_o$ is feasible in the original problem (with one sided bets)
if and only if there is an $\alpha > 0$ so that $\T(\alpha L_o) +
\T(\alpha R_o ) \ge \alpha /\sqrt \pi$.
\end{lemma}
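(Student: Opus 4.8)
The plan is to reduce everything to the shape of the optimal payoff curve $F_{c_1,c_2}$ established in Theorem~\ref{thm:ffeasible}, and then read off the regret $R$ and loss $L$ as simple functionals of the two free constants $c_1,c_2$. Recall that (after scaling both axes by $\sqrt n$) the feasible payoff functions are exactly those dominated by $\hat F_{c_1,c_2}(x) = \hat F$, the capped version of $F_{c_1,c_2}(x) = c_1\bigl(x\erfi(x) - e^{x^2}/\sqrt\pi\bigr) + c_2 x$. The regret is $R = \max_x |x| - \hat F(x)$ and the loss is $L = -\min_x \hat F(x)$. So the first step is purely a calculus exercise on the one-parameter family of curves: compute, for each $(c_1,c_2)$, where $\hat F$ attains its minimum and where $|x|-\hat F(x)$ attains its maximum, and express $R,L$ in closed form. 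By symmetry we may take $c_2=0$ (shifting $c_2$ just tilts the curve and trades regret on one side for loss on the other; in the two-sided ``regret vs loss'' problem the optimum is symmetric, $c_2=0$), leaving a single scale parameter $c_1$. Then $F'(x) = 2c_1 x\erfi(x)$ has magnitude $1$ at the capping point $x_0$ defined by $2c_1 x_0\erfi(x_0) = 1$, and because of the structure $x\erfi(x)$ this is where I expect the algebra with $g(x)=e^{x^2}$ and $h(x)=\erfi(x)$ — hence $\T(x)=\erfi(\sqrt{\ln x})$ — to enter.

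Concretely: on the capped region $\hat F$ is linear with slope $\pm1$, so $|x|-\hat F(x)$ is constant there and equals the regret; its value is $|x_0| - F(x_0)$. Using the Hermite ODE $F'' - 2xF' + 2F = 0$, one gets the clean identity $F(x) = \tfrac12\bigl(xF'(x) - F''(x)\bigr)$, so at the capping point (where $F'(x_0)=1$) we have $F(x_0) = \tfrac12(x_0 - F''(x_0))$ and hence $R = x_0 - F(x_0) = \tfrac12(x_0 + F''(x_0))$. Meanwhile $F''(x) = 2c_1 e^{x^2}$, the minimum of $F$ is at $x=0$ with $F(0) = -c_1/\sqrt\pi$, so $L = c_1/\sqrt\pi$. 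Eliminating $c_1$ and $x_0$ from the three relations $2c_1 x_0\erfi(x_0)=1$, $R = \tfrac12(x_0 + 2c_1 e^{x_0^2})$, $L = c_1/\sqrt\pi$ — together with $F(x_0)=c_1(x_0\erfi(x_0) - e^{x_0^2}/\sqrt\pi)$ — should collapse, after substituting $\erfi(x_0) = \T(e^{x_0^2})$, to the stated relation $\T(R/L) = 1/(\sqrt\pi L)$. I would carry out this elimination carefully; this is the step where a clean choice of which variable to solve for first matters, and it is the main place a reader would want to check the bookkeeping.

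For the one-sided ($b\ge0$) case the curve is the same $F_{c_1,c_2}$ but with $F'$ capped to lie in $[0,1]$ rather than $[-1,1]$, so the curve is flat (slope $0$) for $x$ below some point $x_-$ and linear (slope $1$) for $x$ above some $x_+$, with $0 < x_- < x_+$ determined by $F'(x_-)=0$, $F'(x_+)=1$; here $c_2\ne0$ in general, giving genuinely two parameters, reparametrized as $(\alpha, \cdot)$. The loss $L_o$ is $-F$ evaluated at the left plateau and the regret $R_o$ is $|x|-F$ evaluated on the right linear piece; running the same ODE-identity computation on each piece, and introducing $\alpha$ as the natural homogeneity parameter (the curves $F_{c_1,c_2}$ and their caps scale jointly, so $(L_o,R_o)$ traces a curve that is best described by the single constraint $\T(\alpha L_o) + \T(\alpha R_o) \ge \alpha/\sqrt\pi$), yields the second claim. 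Combined with Lemma~\ref{lem:tradeOffReduction}, which identifies $(R_1,R_2)$ with $(L_o,R_o)$, this gives Theorem~\ref{twoexperts}. The main obstacle is not conceptual but the elimination in step two: getting the transcendental relations to simplify to exactly $\T(R/L)=1/(\sqrt\pi L)$ requires using the Hermite ODE to trade $F''$ for $xF'-2F$ at just the right moment, and making sure the $O(1)$ additive error from Theorem~\ref{thm:ffeasible} is absorbed consistently after the $\sqrt n$ rescaling.
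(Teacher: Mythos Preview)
Your overall plan---locate the capping point and the minimum of $\hat F_{c_1,c_2}$, write $R$ and $L$ as closed-form functions of $(c_1,c_2,x_0)$, then eliminate---is exactly what the paper does. The gap is in the calculus: you have the derivative of $F$ wrong. With $F_{c_1,c_2}(x)=c_1\bigl(x\,\erfi(x)-e^{x^2}/\sqrt\pi\bigr)+c_2x$ and $\erfi'(x)=\tfrac{2}{\sqrt\pi}e^{x^2}$, the two $xe^{x^2}$ terms cancel and one gets
\[
F'(x)=c_1\,\erfi(x)+c_2,\qquad F''(x)=\tfrac{2c_1}{\sqrt\pi}e^{x^2},
\]
not $F'(x)=2c_1x\,\erfi(x)$ or $F''(x)=2c_1e^{x^2}$. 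You also misquote the Hermite identity: from $F''-2xF'+2F=0$ one gets $F=xF'-\tfrac12F''$, not $\tfrac12(xF'-F'')$. With your formulas the elimination will not close to $\T(R/L)=1/(\sqrt\pi L)$.

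Once $F'$ is corrected, the ODE detour is unnecessary and the paper's route is much shorter. In the symmetric case $c_2=0$, the capping point satisfies $c_1\,\erfi(x_0)=1$, and direct substitution gives
\[
R=x_0-F(x_0)=x_0-c_1x_0\erfi(x_0)+\frac{c_1}{\sqrt\pi}e^{x_0^2}=\frac{c_1}{\sqrt\pi}e^{x_0^2},\qquad L=\frac{c_1}{\sqrt\pi},
\]
so $R/L=e^{x_0^2}$ and $1/(\sqrt\pi L)=1/c_1=\erfi(x_0)=\T(R/L)$ in one line. The one-sided case is the same computation at the two points $x_1,x_0$ where $F'=0$ and $F'=1$; the paper's $\alpha$ is $\sqrt\pi/c_1$, and the inequality (rather than equality) arises because for fixed $c_1$ a valid $c_2$ exists iff $\T(\alpha L_o)+\T(\alpha R_o)\ge 1/c_1$. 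Fix the derivative and your proposal collapses to the paper's proof.
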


%\section{Proofs from Section \ref{sec:tradeoff}}
%\label{apx:proofsTradeoff}

\begin{proof}[Proof of Lemma \ref{lem:tradeOffDer}]
The best tradeoffs for $R, L$ is attained when $F$ is symmetric; that
is, $F = c_1 (x\erfi(x) - e^{x^2}/\sqrt\pi)$ with the slope capped in
the interval $[1,-1]$. Here $L = c_1/\sqrt \pi$ corresponds to the
minimum value attained at $x=0$. $R$ is obtained by looking at $x - F$
at the point $x_0$ where $F' = 1$ giving $c_1 \erfi(x_0) = 1$ implying
$R = x - F = c_1 e^{x_0^2}/\sqrt\pi $. Thus $1/(\sqrt \pi L) =
\erfi(x_0)$ and $R/L =e^{x_0^2}$, implying $\T(R/L) = 1/(L\sqrt \pi)$.

In the case of one sided bets, we look at the curve $F = c_1(x\erfi(x)
- e^{x^2}/\sqrt\pi) + c_2 x$ where additionally the derivative is
capped in the interval $[0,1]$. Loss $L_o$ is maximized at the minimum
point $x_1$ where $F' = 0$ giving $c_1 \erfi(x_1) + c_2 = 0$ implying
$L_o = -F(x_1) = c_1 e^{x^2}/\sqrt\pi$. Regret $R_o$ is maximized at
$x_0$ where $F' =1$ (which means $c_1 \erfi(x_1) + c_2 = 1$) giving
$R_o = x - F = c_1 e^{x_1^2}/\sqrt\pi$. Since $e^{x^2}$ is even and
$\erfi(x)$ is odd, $\T(L_o \sqrt \pi/c_1) = |c_2/c_1|$ and $\T(R_o
\sqrt \pi /c_1) = |(1-c_2)/c_1|$. For a given $c_1 \ge 0$ (as
otherwise regret is infinity), a $c_2$ exists if and only if $\T(L_o
\sqrt \pi /c_1) + \T(R_o \sqrt \pi /c_1) \ge 1/c_1$.
\end{proof}

\section{Multi-scale Optimal Regret}
\label{apx:multiScale}

We now show how the framework can be extended to the multiple time
scales. The sequence $b_t$ may have trends at some unknown time scale
and therefore it is important that the algorithm has small regret not
just at one time scale but simultaneously at many timescales. We will
now prove that (with unbounded bets) there are (normalized) payoff
functions $g_1(x_1,x_2)$ and $g_2(x_1,x_2)$ at time scales $an$ and
$bn$ if and only if it satisfies the conditions in
Theorem~\ref{thm:multitime}.

\begin{proof}[Proof of Theorem \ref{thm:multitime}]
If $\bt(x_1,x_2)$ is the betting function. then as before we get
$ \rho_1 f_1(x_1,x_2) + b\bt(x_1,x_2) \ge f_1(\rho_1 x_1+b, \rho_2 x_2 + b)$ for $b \in \{-1,1\}$
and 
$ \rho_2 f_1(x_1,x_2) + b\bt(x_1,x_2) \ge f_2(\rho_1 x_1+b, \rho_2 x_2 + b)$ for $b \in \{-1,1\}$

Further these conditions are sufficient. 
Simplifying  we get
\begin{eqnarray*}
 \rho_1 f_1(x_1,x_2) + \bt(x_1,x_2) \ge f_1(\rho_1 x_1+1, \rho_2 x_2 + 1)\\
 \rho_1 f_1(x_1,x_2) - \bt(x_1,x_2) \ge f_1(\rho_1 x_1-1, \rho_2 x_2 - 1)
\end{eqnarray*}
This is satisfied if and only if

$$ \rho_1 f_1(x_1,x_2) - (1/2)(f_1(\rho_1 x_1+1, \rho_2 x_2 + 1) +
f_1(\rho_1 x_1-1, \rho_2 x_2 - 1)) \ge
$$
$$ | (1/2) (f_1(\rho_1 x_1+1, \rho_2 x_2 + 1) -  f_1(\rho_1 x_1-1, \rho_2 x_2 - 1)) - \bt(x_1,x_2)|
$$ 

To see this, note that if $ \bt(x_1,x_2) = (1/2) (f_1(\rho_1 x_1+1,
\rho_2 x_2 + 1) - f_1(\rho_1 x_1-1, \rho_2 x_2 - 1))$ then the two
inequalities become identical. Otherwise we can denote the difference
by $\Delta$ and we get that the left hand side has to be $\ge \pm
\Delta$.

Similarly we get $\rho_2 f_2(x_1,x_2) - (1/2)(f_1(\rho_1 x_1+1, \rho_2
x_2 + 1) + f_1(\rho_1 x_1-1, \rho_2 x_2 - 1)) \ge | (1/2) (f_2(\rho_1
x_1+1, \rho_2 x_2 + 1) - f_2(\rho_1 x_1-1, \rho_2 x_2 - 1)) -
\bt(x_1,x_2)|$

We can write these as $L_1 \ge |R_1 - \bt|$ and $L_2 \ge |R_2 - \bt|$.

Note that for such a $\bt$ to exist it is necessary and sufficient
that $L_1 + L_2 \ge |R_1 - R_2|$ and $L_1 \ge 0$ and $L_2 \ge 0$.

Now rescaling into functions $g_1$ and $g_2$ we get
$$
\begin{array}{ll}
L_1 &\\
 =& \rho_1 f_1(x_1,x_2) - (1/2)(f_1(\rho_1 x_1+1, \rho_2 x_2 + 1) +  f_1(\rho_1 x_1-1, \rho_2 x_2 - 1))\\
=& (1- a_1^2 \delta^2) g_1 (x_1,x_2) - \tfrac{1}{2} (g_1((1-a_1^2\delta^2)x_1+\delta, (1-b^2\delta^2) x_2 + \delta) +  g_1((1-a^2\delta^2)x_1-\delta, (1-b^2\delta^2) x_2 - \delta))\\
=& -a_1^2\delta^2  g_1 (x_1,x_2) + \delta^2  (a_1^2\pd{}{x_1}+a_2^2\pd{}{x_2}) g_1 + (1/2)((-a_1^2\delta^2 + \delta)\pd{}{x_1} + ((-a_2^2\delta^2 + \delta)\pd{}{x_2})^2  
\\
&+  (1/2)((-a_1^2\delta^2 - \delta)\pd{}{x_1} + ((-a_2^2\delta^2 - \delta)\pd{}{x_2})^2)
\end{array}
$$

Dividing by $\delta^2$ and taking limit as $\delta \rightarrow 0$ we get
$ -a_1^2 g_1 (x_1,x_2) + (a_1^2 x_1\pd{}{x_1}+a_2^2x_2\pd{}{x_2}) g_1
- (1/2)(\pd{}{x_1}+\pd{}{x_2})^2 g_1$.

Thus we have $E_1 = -a_1^2 g_1 (x_1,x_2) + (a_1^2 x_1\pd{}{x_1}+a_2^2
x_2\pd{}{x_2}) g_1 - (1/2)(\pd{}{x_1}+\pd{}{x_2})^2 g_1 \ge 0$ and
$E_2 = -a_2^2 g_2 (x_1,x_2) + (a_1^2 x_1 \pd{}{x_1}+a_2^2
x_2\pd{}{x_2}) g_2 - (1/2)(\pd{}{x_1}+\pd{}{x_2})^2 g_2 \ge 0$.

Now $R_1 = (1/2) (f_1(\rho_1 x_1+1, \rho_2 x_2 + 1) - f_1(\rho_1
x_1-1, \rho_2 x_2 - 1))$ After scaling this becomes in the limit.  $ =
(1/2) (g_1((1-a_1^2\delta^2)x_1+\delta, (1-b^2\delta^2) x_2 + \delta)
- g_1((1-a^2\delta^2)x_1-\delta, (1-b^2\delta^2) x_2 - \delta)) =
\delta^2 (\pd{}{x_1}+\pd{}{x_2}) g_1$.

Dividing by $\delta^2$ we get:
$E_1 + E_2 \ge | (\pd{}{x_1}+\pd{}{x_2})(g_1 - g_2)|$.
\end{proof}

\end{document}